\theoremstyle{plain}
\newtheorem{theorem}{Theorem}[section]
\newtheorem{pt}{Practical Takeaway}[section]
\newtheorem{proposition}[theorem]{Proposition}
\newtheorem{lemma}[theorem]{Lemma}
\newtheorem{corollary}[theorem]{Corollary}
\theoremstyle{definition}
\theoremstyle{remark}
\newcommand{\mP}{\mathcal P}
\newcommand{\mG}{\mathcal G}
\newcommand{\mN}{\mathcal N}
\newcommand{\tr}{\mathrm{tr}}
\newcommand{\E}[1]{\mathbb E\left[#1\right]}
\newcommand{\norm}[1]{\left|\left|#1\right|\right|}
\newcommand{\lr}[1]{\left(#1\right)}
\newcommand{\set}[1]{\left\{#1\right\}}
\newcommand{\abs}[1]{\left|#1\right|}
\newcommand{\R}{\mathbb R}
\newcommand{\twomat}[4]{\lr{\begin{array}{cc} #1 & #2 \\ #3 &#4\end{array}}}
\title[Initialization and Architecture in GNNs]{Principles for Initialization and Architecture Selection in Graph Neural Networks with ReLU Activations}
\author{Gage DeZoort}
\address{Department of Physics\\Princeton University}
\email{jdezoort@princeton.edu}
\author{Boris Hanin}
\address{Department of Operations Research and Financial Engineering\\
Princeton University}
\email{bhanin@princeton.edu}
\begin{document}

\maketitle

\begin{abstract}
This article derives and validates three principles for initialization and architecture selection in finite width graph neural networks (GNNs) with ReLU activations. First, we theoretically derive what is essentially the unique generalization to ReLU GNNs of the well-known He-initialization. Our initialization scheme guarantees that the average scale of network outputs and gradients remains order one at initialization. Second, we prove in finite width vanilla ReLU GNNs that oversmoothing is unavoidable at large depth when using fixed aggregation operator, regardless of initialization. We then prove that using \textit{residual aggregation operators}, obtained by interpolating a fixed aggregation operator with the identity, provably alleviates oversmoothing at initialization. Finally, we show that the common practice of using residual connections with a fixup-type initialization provably avoids correlation collapse in final layer features at initialization. Through ablation studies we find that using the correct initialization, residual aggregation operators, and residual connections in the forward pass significantly and reliably speeds up early training dynamics in deep ReLU GNNs on a variety of tasks. 

\end{abstract}

\section{Introduction}\label{sec:intro}
Graph neural networks (GNNs) have emerged in recent years as flexible parametric models for learning from graph-based data~\cite{zhou2020graph,wu2020comprehensive,bronstein2021geometric,musaelian2023learning}. The forward pass in a GNN intertwines the usual feed-forward structure of a neural network with an aggregation step that makes features at neighboring graph vertices more similar (see \S \ref{sec:def}). As with most neural network architectures, however, using GNNs effectively in practice requires carefully choosing a number of architectural and optimization hyperparameters. 
In this article, we blend ideas from spectral graph theory with analysis in the style of deep information propagation \cite{poole2016exponential,schoenholz2016deepinformation,roberts2022principles} to develop practical and principled  initialization schemes and GNN architectures that help take the guesswork out of making GNNs that are reliably trainable to large depth. Specifically, we explain how to provably avoid three common failure modes of GNN training at initialization:\\

\begin{itemize}
    \item[\textbf{(1)}] 
    \textbf{Exponential outputs.} 
    This failure mode occurs when model outputs grow or decay exponentially with network depth, leading to significant slowdowns in early training dynamics (see Figure \ref{fig:evgp-bad}).\footnote{Due to the homogeneity of ReLU networks, exponential growth/decay of network outputs is equivalent to exponential growth/decay of input-output Jacobians since the model output at an input $x$ is simply the inner product between $x$ and the input-output Jacobian evaluated at $x$. As we explain in \S \ref{sec:evgp} this failure mode is also closely connected to the exploding/vanishing gradient problem, which concerns the gradients of the model output with respect to its parameters.} We derive a simple generalization to GNNs of the well-known He-Initialization~\cite{he2015delving} which is sufficient for avoiding exponential growth/decay of model outputs in vanilla ReLU GNNs. See Theorems \ref{thm:He-init-inf} and \ref{thm:He-init} and Figures \ref{fig:evgp-bad}, \ref{fig:init}.\\
    
    
    \item[\textbf{(2)}] \textbf{Oversmoothing.} 
    This well-known failure mode, which is specific to GNNs, occurs when features in the final layer are almost entirely concentrated in the top eigenspace of the aggregation operator \cite{li2018deeper}. When oversmoothing occurs, it often results in poor performance in tasks that must differentiate neighboring vertices (see Figure \ref{fig:os-bad}). We prove that oversmoothing necessarily occurs at large depth in randomly initialized ReLU GNNs that do not have residual connections and use a fixed aggregation operator. In contrast, we prove that \textit{residual aggregation operators} (see \eqref{eq:p-res-def}), obtained by interpolating a given aggregation operator with the identity, provably mitigate oversmoothing in fully connected ReLU GNNs, even without skip connections. This builds on prior empirical work, such as \cite{chen2020simple}. See Theorems \ref{thm:os-inf}, \ref{thm:res-os-inf}, and \ref{thm:oversmoothing} as well as Practical Takeaway \ref{pt:res-ag} and Figures \ref{fig:ablation-performance}, \ref{fig:ablation-time}, \ref{fig:os-bad} for experimental validation that residual aggregation operators improve early training dynamics.\\
    
    \item[\textbf{(3)}] \textbf{Correlation Collapse.} 
    This failure mode occurs when the final hidden layer features are highly correlated across graph vertices. Though related, this failure mode is distinct from oversmoothing since the final layer features may be highly correlated without having low frequency (i.e. being oversmoothed). We give a simple explanation for why residual connections with fixup-type initialization \cite{zhang2019fixup} avoid correlation collapse and show empirically that they enhance trainability in deep ReLU GNNs. Again, such ideas were studied empirically in prior work such as \cite{chen2020simple}. See Practical Takeaway \ref{pt:res-2} and Figures \ref{fig:ablation-performance}, \ref{fig:ablation-time}, \ref{fig:cc-bad}. \\
\end{itemize} 

\begin{figure}
    \centering
    \includegraphics[width=0.7
    \textwidth]{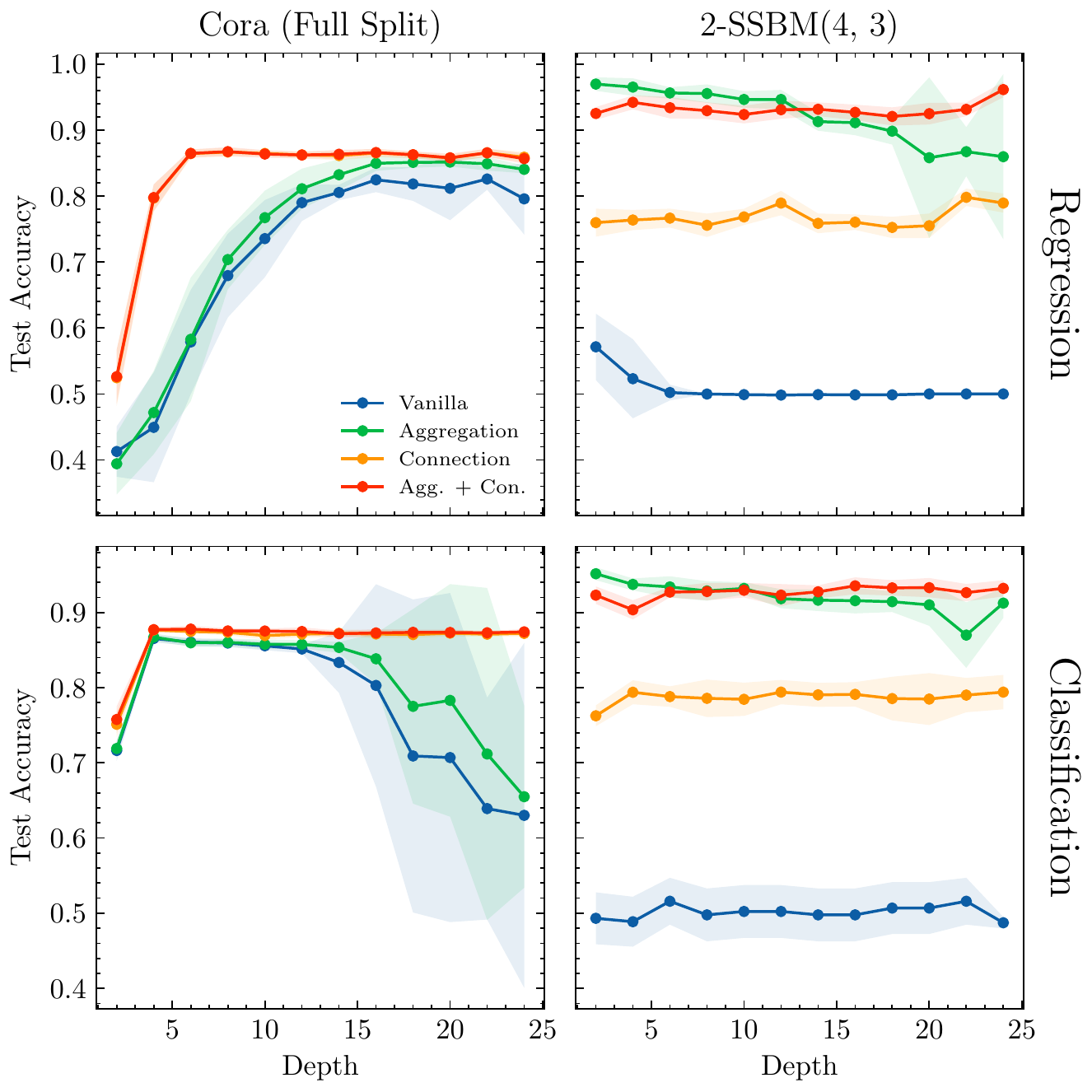}
    \caption{{\footnotesize Classification and regression test accuracy on Cora and two-class stochastic block model (2-SSBM) graph using GNNs of four kinds: vanilla (blue), with residual aggregation operators $(1-t)I + t \mP$ (green), with residual connections (orange), and with both residual aggregation operators and residual connections (red). Symmetric degree-normalized adjacency matrices with self-loops $\mP$ are the base aggregation operators. Layers widths are $128$ for Cora and $64$ for SSBMs. Training is by GD for a maximum of 1500 steps and early stopping with 250 step patience. Initial learning rate is dropped by 0.25 after 200 steps. At each depth, 15 iid initializations with weight variance $C_W^{(\ell)}=2.25/(n_{\ell-1}\lambda_1(\mathcal{P})^2)$ are generated for each set of hyperparameters: $(\mathrm{lr})$ for vanilla GNNs, $(\mathrm{lr}, t)$ for GNNs with residual aggregations, $(\mathrm{lr}, \beta)$ for GNNs with residual connections, and $(\mathrm{lr}, t, \beta)$ for GNNs with both residual aggregations and connections. We scan over $\mathrm{lr}\in\{0.1, 0.05, 0.01, 0.005, 0.001\}$ and $t,\beta\in\{0.1,0.2,0.4,0.6,0.8,0.9\}$. The hyperparameter configuration with the best average performance is reported at each depth. The performance of an individual model is measured as its test set accuracy at the epoch with maximum validation set accuracy. These performances are averaged for each set of hyperparameters, and the set of hyperparameters with the maximum average validation set accuracy is selected. We then report the corresponding average performance on the test set.  
    }}
    \label{fig:ablation-performance}
\end{figure}

Taken together, our theoretical and empirical results suggest that by choosing correct initialization schemes that avoid exponential outputs/gradients as in (1), using residual aggregation operators that avoid oversmoothing as in (2), and inserting residual connections with fixup-type initialization to avoid correlation collapse as in (3) GNNs quickly and reliably begin training, even at large depth. This is illustrated in the ablation studies summarized in Figures \ref{fig:ablation-performance} and \ref{fig:ablation-time}. 

\subsection*{Acknowledgements} GD gratefully acknowledges funding from DOE grant DE-SC0007968. BH gratefully acknowledges funding from NSF CAREER grant DMS-2143754 and NSF grants DMS-1855684, DMS-2133806 as well as an ONR MURI on Foundations of Deep Learning.

\subsection*{Outline} The rest of this article is organized as follows. First, in \S  \ref{sec:exp-inf} we define GNNs, present our main experimental results, and provide an informal discussion of our theoretical contributions. We then give in \S  \ref{sec:lit-rev} an overview of related literature. We turn in \S \ref{sec:results} to precise statements of our main theorems. This is followed by a description in \S \ref{sec:exp} of our experimental setup. Finally, the proofs of our theoretical results are detailed in \S \ref{sec:proofs}.

\begin{figure}
    \centering
    \includegraphics[width=0.7\textwidth]{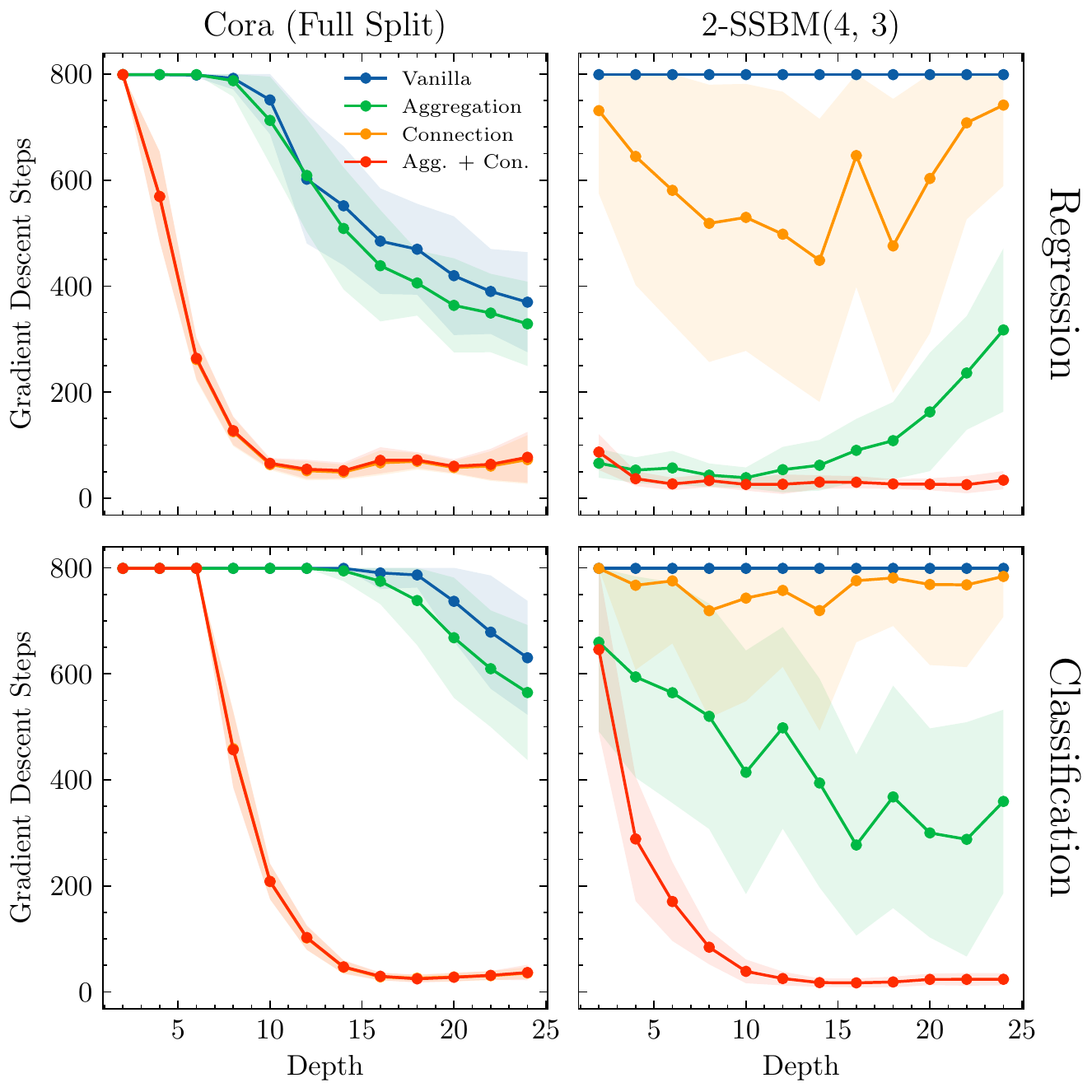}
    \caption{{\footnotesize Number of gradient descent steps with a constant learning rate required on Cora and a two-class stochastic block model (2-SSBM) graph to reach the training accuracy of the best linear classifier using GNNs of four kinds: vanilla (blue), with residual aggregation operators $(1-t)I + t \mP$ (green), with residual connections (orange), and with both residual aggregation operators and residual connections (red). Symmetric degree-normalized adjacency matrices with self-loops $\mP$ are the base aggregation operators. Layers widths are $128$ for Cora and $64$ for SSBMs. Training is by GD for a maximum of 800 steps. At each depth, 15 iid initializations with weight variance $C_W^{(\ell)}=2.25/(n_{\ell-1}\lambda_1(\mathcal{P})^2)$ are generated for each set of hyperparameters: $(\mathrm{lr})$ for vanilla GNNs, $(\mathrm{lr}, t)$ for GNNs with residual aggregations, $(\mathrm{lr}, \beta)$ for GNNs with residual connections, and $(\mathrm{lr}, t, \beta)$ for GNNs with both residual aggregations and connections. The learning rates are kept at a constant $0.05$ for Cora regression,  $0.005$ for Cora classification, $0.01$ for 2-SSBM regression, and $0.001$ for 2-SSBM classification. The hyperparameter set with the fastest average training time is reported at each depth for residual strengths $t,\beta\in\{0.1,0.2,0.4,0.6,0.8,0.9\}$.
}}
\label{fig:ablation-time}
\end{figure}

\section{Experiments and Informal Overview of Theorems}\label{sec:exp-inf}
We present in \S \ref{sec:evgp-inf} - \S \ref{sec:ntk-inf} informal statements of our theoretical results, highlights of our key empirical findings, and summaries our main practical takeaways. Before doing so, we introduce in \S \ref{sec:def} our notation and assumptions. 

\subsection{Definitions and Assumptions}\label{sec:def}
We analyze depth $L$ fully connected GNNs with input dimension $n_0$, output dimension $n_{L+1}$, hidden layer widths $n_1,\ldots, n_L$, and ReLU activations. By definition\footnote{We study only vanilla GNNs that have vertex features but not edge features. We also set biases to zero.}, an input to such a network is a matrix $x\in \R^{\abs{V}\times n_0}$, which assigns to every vertex $v\in V$ on a  graph $\mG = \lr{V,E}$ with $\abs{V}$ vertices a vector of features $x_v\in \R^{n_0}$. The forward pass\footnote{We describe the forward pass for node regression tasks. For graph or node classification, for instance, there may be additional pooling and/or softmax layers.}
\[
x\in \R^{\abs{V}\times n_0}\quad \mapsto\quad z^{(L+1)}(x;\theta)\in \R^{\abs{V}\times n_{L+1}}
\]
is computed by defining pre-activations 
\[
z^{(\ell)}(x)=\lr{z_{v;i}^{(\ell)}(x),\quad v\in V,\, 1\leq i \leq n_\ell}\in \R^{\abs{V}\times n_\ell}
\]
in layer $\ell$ through the recursion
\begin{equation}\label{eq:z-def}
z^{(\ell+1)}(x) = \begin{cases}
 \mP^{(\ell)}\sigma\lr{z^{(\ell)}(x)}W^{(\ell+1)},&\quad \ell \geq 1\\
x W^{(1)},&\quad \ell =0
\end{cases}.
\end{equation}
Here, $\sigma$ is the ReLU applied componentwise, $\mP^{(\ell)}$ is a symmetric $\abs{V}\times\abs{V}$ matrix that we refer to as the aggregation operator in layer $\ell$, and $W^{(\ell+1)}\in \R^{n_{\ell}\times n_{\ell+1}}$ is a matrix of weights. In components, the recursion \eqref{eq:z-def} with $\ell\geq 1$ takes the form
\[
z_{v;i}^{(\ell+1)}(x) = \sum_{u\in V}\sum_{j=1}^{n_{\ell}} \mP_{vu}^{(\ell)} W_{ij}^{(\ell+1)}\sigma\lr{z_{u;j}^{(\ell)}(x)}.
\]
At initialization we take independent centered Gaussian weights
\begin{equation}\label{eq:rand-def}
    W_{ij}^{(\ell)}\sim \mN\lr{0,C_W^{(\ell)}},\qquad i=1,\ldots, n_{\ell+1},\, j=1,\ldots, n_\ell.
\end{equation}
One of our key theoretical and practical contributions, explained in \S \ref{sec:evgp-inf}, is a principled way to choose the weight variances $C_W^{(\ell)}$. We will assume that for every $\ell\geq 1$, the aggregation operators satisfy the following three conditions:
\begin{align}
    \label{eq:P-as1}&\mP^{(\ell)} \text{ is a symmetric}\\
    \label{eq:P-as2}&\mP^{(\ell)} \text{ has non-negative entries}\\ 
    \label{eq:P-as4}&\mP^{(\ell)} \text{ have the same top eigenspace for all $\ell$}
\end{align} 
For example, we may take $\mP^{(\ell)}$ to be a symmetrically normalized adjacency matrix (potentially with self-loops), as is commonly done in practice. As a consequence of \eqref{eq:P-as4}, we may write 
\[
\Pi_1:=\text{ projection onto the top eigenspace of $\mP^{(\ell)}$}.
\]
Finally, for any non-zero network input $x\in \R^{\abs{V}\times n_0}$ we define the \textit{oversmoothing ratio} after $\ell$ layers to be
\begin{equation}\label{eq:r-def}
r^{(\ell)} :=  \E{\frac{1}{n_\ell}\sum_{i=1}^{n_\ell}\norm{\Pi_1 z_i^{(\ell)}(x)}^2}\bigg/\E{\frac{1}{n_\ell}\sum_{i=1}^{n_\ell}{\norm{z_i^{(\ell)}(x)}^2}},    
\end{equation}
where $\norm{\cdot}$ is the $\ell_2$-norm and $\E{\cdot}$ denotes the average with respect to the random weights \eqref{eq:rand-def}. By the homogeneity of ReLU nets and the rotation invariance of the Gaussian, $r^{(\ell)}$ is independent of the input $x$. The oversmoothing ratio  measures the proportion of the $\ell_2$ norm of the pre-activations in layer $\ell$ contained  in the top eigenspace of the aggregation operators $\mP^{(\ell)}$.

\subsection{Avoiding Exponential Outputs}\label{sec:evgp-inf} 
Let us consider a fully connected depth $L$ ReLU GNN on a graph $\mathcal G = \lr{V,E}$. Given a vertex $v\in V$ and a network input $x$, we measure the relative size of the corresponding network outputs using
\begin{equation}
    \text{output distortion}:=\frac{1}{n_L}\norm{z_v^{(L)}(x)}^2\bigg/\frac{1}{n_0}\norm{x_v}^2.
\label{eq:output-distortion}
\end{equation}\\
The exponential outputs failure mode occurs when  output distortion grows or decays exponentially in network depth. To what extent this occurs is practically important since, as we show in Figure \ref{fig:evgp-bad}, large values of the output distortion significantly slow down in learning with vanilla ReLU GNNs at large depth (see also \cite{poole2016exponential,schoenholz2016deepinformation,xiao2018dynamical} for similar conclusions for other architectures). 

\begin{figure}
  \centering              
    \includegraphics[width=0.7\linewidth]{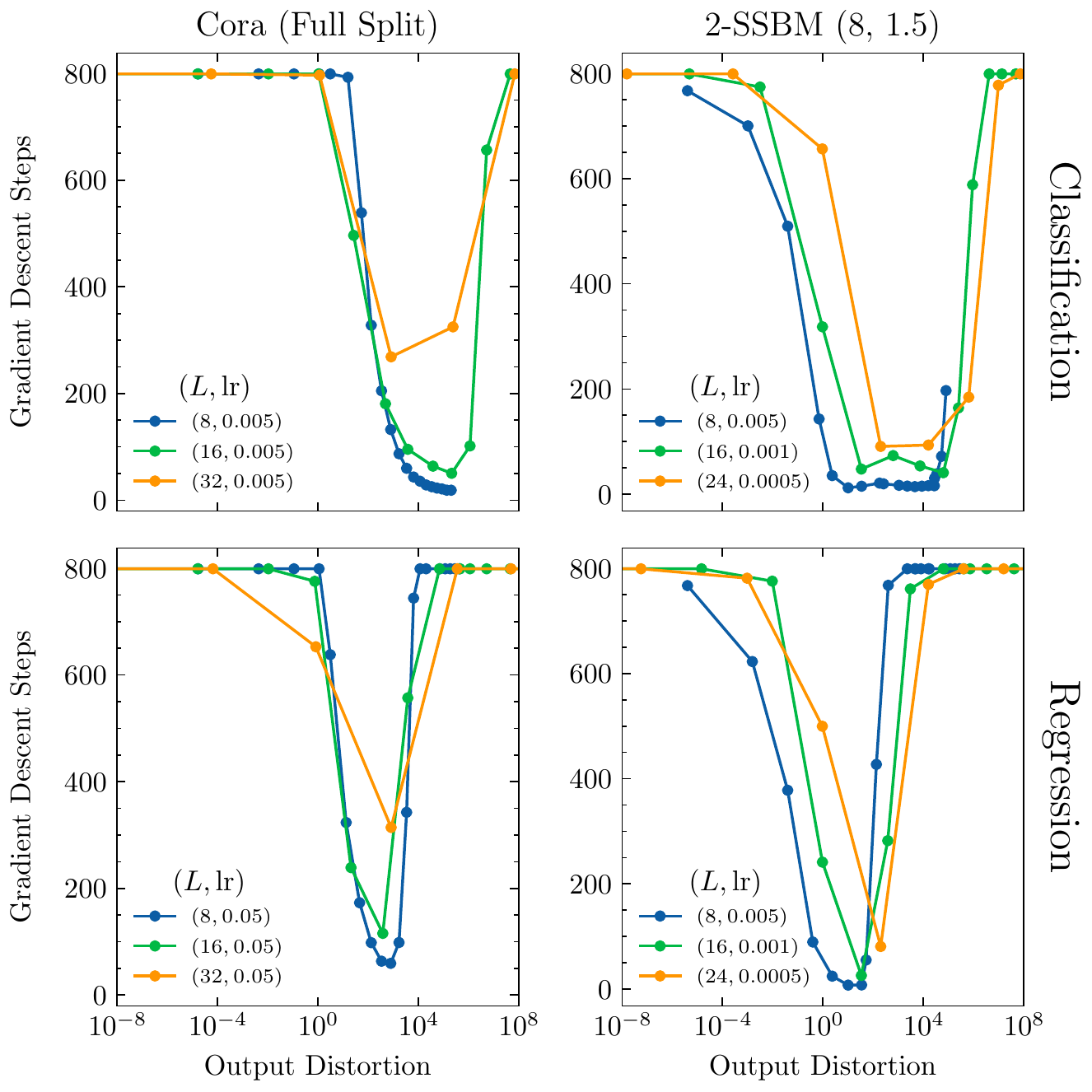} \\
  \caption{{\footnotesize 
  Number of GD steps with constant step size lr on Cora and a two-class stochastic block model (2-SSBM) graph to reach training accuracy of best linear classifier for GNNs with residual aggregation operators $(1-t)I + t \mP$. Here, $\mP$ is the symmetric degree-normalized adjacency matrix with self-loops. The learning task is treated as either classification (cross-entropy loss) or regression (MSE with one-hot labels). Output distortion is varied by setting $C_W^{(\ell)}=C_W/(n_{\ell-1}\lambda_1(\mathcal{P})^2)$ and scanning over $C_W\in\{0.5, 1.0, 1.5, \ldots,10.0\}$ for each depth $L$. We scan over $t\in \{0.2, 0.4, 0.6, 0.8\}$ at each setting of $(L, C_W)$ to avoid oversmoothing and choose the fastest training value averaged over 25 random initializations. The corresponding output distortion is computed as an empirical average over all vertices and each random initialization. All models are allowed to train for a maximum of 800 steps. For settings of ($L$, $C_W$, $t$) that never reached the training accuracy threshold we conservatively report 800 steps.
  }}
  \label{fig:evgp-bad}
\end{figure}

\begin{figure}
  \centering              
    \includegraphics[width=.7\linewidth]{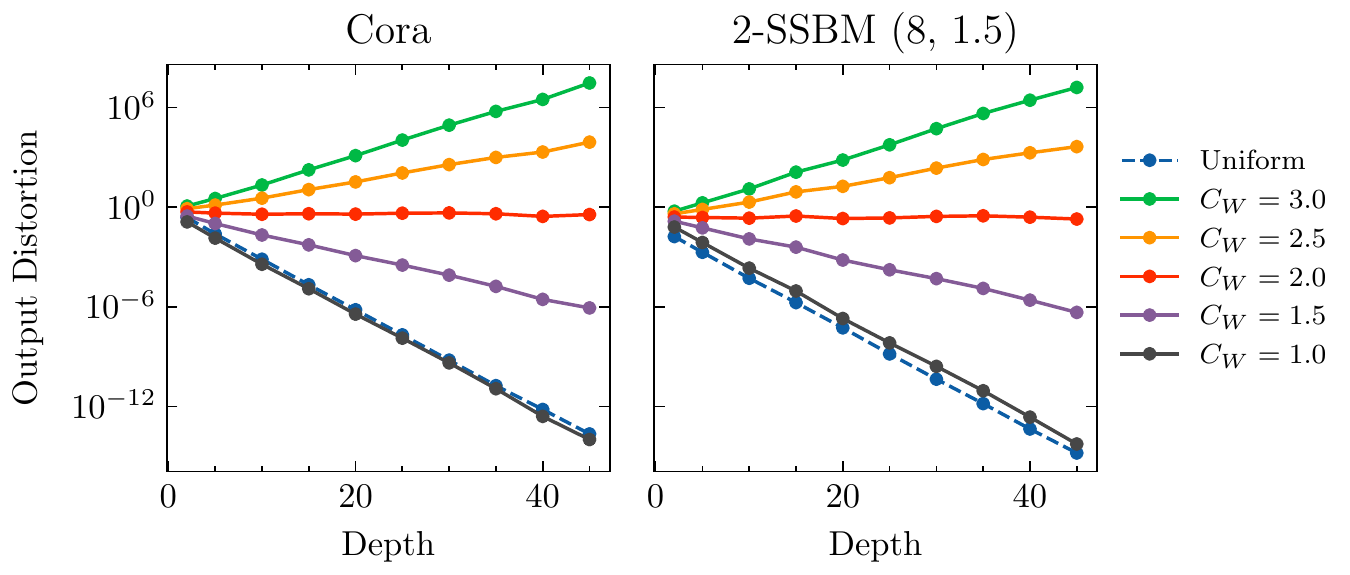} \\
  \caption{\footnotesize Preactivation magnitudes in the last hidden layer plotted against depth for both the default $\textsc{PyTorch Geometric}$ uniform initialization scheme and Gaussian initialization schemes with various values of $C_W^{(\ell)}=C_W/n_{\ell-1}\lambda_1(\mathcal{P})^2$.  For Cora (left) and a 2-SSBM with exact recoverability (right), we generate 100 random GNNs at varying depths and initialization schemes. In all cases we use symmetric degree-normalized adjacency matrices with self-loops as aggregation operators and hidden layer widths are fixed to  $128$ for Cora and $64$ for SSBM. Vertex features from each graph are L2-normalized such that $||x_v||^2=1$ for all vertices $v$ and propagated through each network. We report the output distortion~\eqref{eq:output-distortion} over all graph vertices and random initializations.}
  \label{fig:init}
\end{figure}

\begin{figure}
  \centering              
    \includegraphics[width=.7\linewidth]{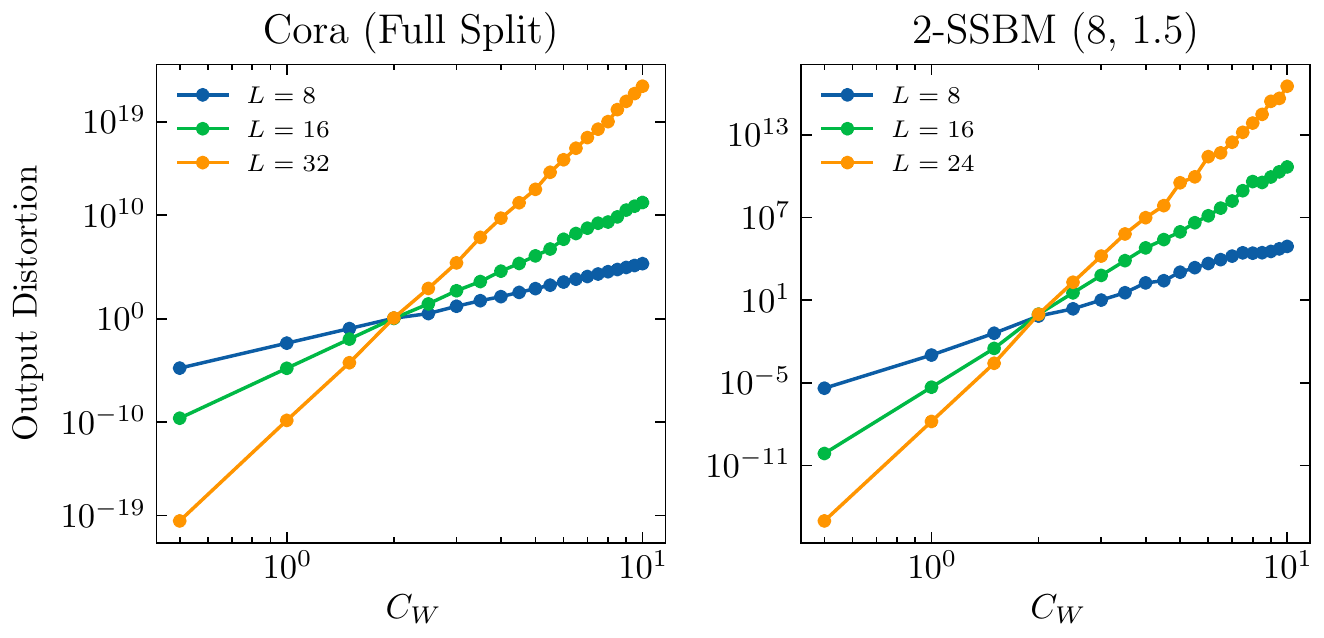} \\
  \caption{\footnotesize For each depth $L$ model reported in the classification portion of Figure \ref{fig:evgp-bad}, we plot the output distortion at initialization as a function of the weight variance scale $C_W$.}
  \label{fig:evgp-fix}
\end{figure}


As shown in Figure \ref{fig:init}, moreover, many commonly used initialization schemes lead to exponentially growing or decaying network outputs (e.g. the uniform and $C_W=1$ lines). However, Figure \ref{fig:init} also suggests that there exists a unique value for the weight variance scale $C_W$ that seems to mitigate this failure mode at large depth. To better understand this, recall that in fully connected non-graph ReLU networks the widely used He-initialization 
\begin{equation}\label{eq:He-intro}
\text{weight variance} ~=~2/\text{fan-in}
\end{equation}
derived in \cite{he2015delving} ensures that in sufficiently wide networks at initialization outputs and gradients neither grow nor decay exponentially with network depth (for a discussion of what constitutes sufficiently wide see \cite{hanin2018neural,hanin2018start}). 

However, this conclusion does not hold for GNNs. Indeed, our first theoretical result shows that the right generalization of He-initialization to ReLU GNNs must take into account the spectral radius of the message passing operator:

\begin{theorem}[Informal Consequence of Theorem \ref{thm:He-init}]\label{thm:He-init-inf} Consider a fully connected ReLU GNN with no residual connections and a fixed aggregation operator $\mathcal P$ satisfying \eqref{eq:P-as1}-\eqref{eq:P-as4}. At initialization, the mean squared activations and gradients neither grow nor decay exponentially with network depth if
\begin{equation}\label{eq:He-gen-intro}
\text{weight variance} ~=~\frac{2}{\text{fan-in}\times \lambda_1(\mP)^2},
\end{equation}
where $\lambda_1(\mP)$ is the largest eigenvalue of $\mP$. 
\end{theorem}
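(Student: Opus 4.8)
The plan is to track, layer by layer, the $\abs{V}\times\abs{V}$ \emph{vertex second-moment matrix} $\Sigma^{(\ell)}$ with entries $\Sigma^{(\ell)}_{vu}:=\tfrac{1}{n_\ell}\E{\inprod{z^{(\ell)}_v(x)}{z^{(\ell)}_u(x)}}$, and to show that under the prescribed weight variance $C_W^{(\ell)}=2/(n_{\ell-1}\lambda_1(\mP)^2)$ both the total mean squared activation $\tr\Sigma^{(\ell)}$ and the portion $\tr\lr{\Pi_1\Sigma^{(\ell)}}$ lying in the top eigenspace of $\mP$ stay trapped between two strictly positive constants independent of the depth $L$. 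Because the model output at an input is the inner product of that input with the input--output Jacobian, and (by ReLU homogeneity and the chain rule) the parameter gradients are assembled from these Jacobians together with the hidden activations, the same two-sided control applied to the backward pass then delivers the statement about gradients.

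The first step is an exact second-moment recursion. Conditioning on the $\sigma$-algebra $\mathcal F_{\ell-1}$ generated by $W^{(1)},\dots,W^{(\ell)}$, using that $W^{(\ell+1)}$ is centered Gaussian with variance $C_W^{(\ell+1)}$ and independent of everything so far, expanding $z^{(\ell+1)}=\mP\,\sigma(z^{(\ell)})W^{(\ell+1)}$, summing over the exchangeable hidden neurons, and using that $\mP$ is symmetric, I obtain the exact finite-width identity
\[
\Sigma^{(\ell+1)}=C_W^{(\ell+1)}\,n_\ell\;\mP\,K^{(\ell)}\,\mP,\qquad K^{(\ell)}_{vu}:=\E{\sigma\big(z^{(\ell)}_{v;1}\big)\,\sigma\big(z^{(\ell)}_{u;1}\big)}.
\]
The structural point, valid at finite width, is that conditionally on $\mathcal F_{\ell-1}$ the pair $(z^{(\ell)}_{v;1},z^{(\ell)}_{u;1})$ is exactly a centered bivariate Gaussian (it is linear in $W^{(\ell)}$), so $K^{(\ell)}_{vu}=\tfrac12\,\E{\sqrt{Q_vQ_u}\;g(\rho_{vu})}$ where $Q_v,\rho_{vu}$ are its conditional variances and correlation and $g$ is the ReLU arc-kernel $g(\rho)=\tfrac1\pi\sqrt{1-\rho^2}+\tfrac\rho2+\tfrac\rho\pi\arcsin\rho$. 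A short calculus check gives $\rho\le g(\rho)\le1$ on $[-1,1]$ — from $g'(\rho)=\tfrac12+\tfrac1\pi\arcsin\rho\ge0$ with $g(1)=1$, and $(g(\rho)-\rho)'=-\tfrac12+\tfrac1\pi\arcsin\rho\le0$ with $g(1)-1=0$ — which yields the entrywise sandwich
\[
\tfrac12\,\Sigma^{(\ell)}\;\le\;K^{(\ell)}\;\le\;\tfrac12\,s^{(\ell)}\big(s^{(\ell)}\big)^{\top},\qquad s^{(\ell)}_v:=\big(\Sigma^{(\ell)}_{vv}\big)^{1/2},
\]
the upper bound being Cauchy--Schwarz together with the exact identity $K^{(\ell)}_{vv}=\tfrac12\Sigma^{(\ell)}_{vv}$.

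Next I substitute $C_W^{(\ell+1)}n_\ell=2/\lambda_1(\mP)^2$. For the no-growth direction, reading off the diagonal of the recursion and using the entrywise upper bound and $\mP\ge0$ gives $s^{(\ell+1)}\le\lambda_1(\mP)^{-1}\mP\,s^{(\ell)}$ coordinatewise; iterating from $\ell=1$ and invoking $\norm{\mP}_{\mathrm{op}}=\lambda_1(\mP)$ (Perron--Frobenius for a symmetric non-negative matrix) yields $\max_v s^{(\ell)}_v\le\norm{s^{(1)}}$ for all $\ell$, hence $\tr\Sigma^{(\ell)}\le\abs{V}\norm{s^{(1)}}^2$, uniformly in $L$. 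For the no-decay direction, I pair the recursion with $\Pi_1$: since $\Pi_1\mP=\mP\Pi_1=\lambda_1(\mP)\Pi_1$ and $C_W^{(\ell+1)}n_\ell\lambda_1(\mP)^2=2$, one gets $\tr\lr{\Pi_1\Sigma^{(\ell+1)}}=2\tr\lr{\Pi_1K^{(\ell)}}$; because $\Pi_1$ has non-negative entries (its range is spanned by Perron vectors of the irreducible blocks of $\mP$), the entrywise bound $K^{(\ell)}\ge\tfrac12\Sigma^{(\ell)}$ gives $\tr\lr{\Pi_1K^{(\ell)}}\ge\tfrac12\tr\lr{\Pi_1\Sigma^{(\ell)}}$, so $\tr\lr{\Pi_1\Sigma^{(\ell)}}$ is nondecreasing in $\ell$ and hence bounded below by $\tr\lr{\Pi_1\Sigma^{(1)}}=C_W^{(1)}\norm{\Pi_1 x}^2>0$ for any input with a nontrivial component in the top eigenspace. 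Combined with the upper bound this is the claim for activations. Running the identical argument on the adjoint recursion $\delta^{(\ell)}_{v;i}=\sigma'(z^{(\ell)}_{v;i})\sum_w\mP_{vw}\sum_kW^{(\ell+1)}_{ki}\delta^{(\ell+1)}_{w;k}$, with $\sigma'$ replacing $\sigma$, using $\E{\sigma'(z)^2}=\tfrac12$ for symmetric $z$ and the standard gradient-independence reduction, controls $\E{\norm{\delta^{(\ell)}}^2}$ from both sides, and hence the Jacobians and parameter gradients.

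The hard part is the ReLU-kernel sandwich and, within it, the \emph{lower} bound $K^{(\ell)}\ge\tfrac12\Sigma^{(\ell)}$: it needs both the arc-kernel inequality $g(\rho)\ge\rho$ and the layerwise conditional-Gaussian reduction, and it must be kept entrywise and paired with the non-negative $\Pi_1$ only at the very end, since $K^{(\ell)}\succeq\tfrac12\Sigma^{(\ell)}$ in the Loewner order is false in general, so a naive spectral argument does not go through. A secondary technical point is making the finite-width conditional Gaussianity precise and justifying the gradient-independence step used for the backward pass.
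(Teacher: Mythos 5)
Your global strategy is the same as the paper's: write the exact second-moment recursion $K^{(\ell+1)} = C_W^{(\ell+1)}n_\ell\,\mP\,\E{\sigma(z_i^{(\ell)})\sigma(z_i^{(\ell)})^T}\mP$, get the upper bound from $\norm{\mP}_{\mathrm{op}}=\lambda_1(\mP)$ together with $\E{\sigma(z)^2}=\tfrac12\E{z^2}$, and get the non-decaying lower bound by pairing with $\Pi_1$, whose entries are non-negative by Perron--Frobenius (the paper's Lemma \ref{lem:pi-pos}), together with the entrywise inequality $\E{\sigma(z_u)\sigma(z_v)}\ge\tfrac12 K_{uv}$. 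The one place you genuinely diverge is the proof of that entrywise inequality: you pass to the closed-form arc-cosine kernel $g(\rho)=\tfrac1\pi\sqrt{1-\rho^2}+\tfrac\rho2+\tfrac\rho\pi\arcsin\rho$ and verify $\rho\le g(\rho)\le1$ by computing $g'(\rho)=\tfrac12+\tfrac1\pi\arcsin\rho$, whereas the paper's Lemma \ref{lem:sigma-trick} obtains $2\E{\sigma(z_1)\sigma(z_2)}-K_{12}=\E{\abs{z_1z_2}{\bf 1}_{\set{z_1z_2\le0}}}\ge0$ in one line from the reflection $(z_1,z_2)\mapsto(-z_1,-z_2)$, which is shorter, does not require knowing the explicit kernel, and sidesteps the degenerate conditional-variance bookkeeping you flag at the end. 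Your coordinatewise bound $s^{(\ell+1)}\le\lambda_1(\mP)^{-1}\mP\,s^{(\ell)}$ for the upper direction also works (it is Cauchy--Schwarz on the off-diagonal entries plus monotonicity of $\mP\ge0$), but it is a mild detour: the paper simply bounds $\tr K^{(\ell+1)}\le\tfrac12 C_W^{(\ell+1)}n_\ell\lambda_1(\mP)^2\tr K^{(\ell)}$ directly via the spectral norm, without any matrix of diagonal square roots. You are also right to emphasize that only the entrywise form of the lower bound, not the Loewner form, may be paired with $\Pi_1$; that is precisely how the paper uses it, by summing $(\pi_j)_{u_1}(\pi_j)_{u_2}\ge0$ against $\E{\sigma(z_{u_1;i})\sigma(z_{u_2;i})}$ before invoking Lemma \ref{lem:sigma-trick}. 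The gradient direction is sketched at the same level of rigor in both your argument and the paper; the honest justification is again a sign-symmetry in $W^{(\ell)}$ (the paper's Lemma \ref{lem:half}, derivative version), not a heuristic gradient-independence assumption. Overall the proposal is correct and closely parallels the paper, with a heavier but valid substitute for the sigma-trick lemma.
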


In practice we find that the initialization \eqref{eq:He-gen-intro} is key to allowing vanilla ReLU GNNs to be trainable at fairly large depth (see Figures \ref{fig:evgp-bad},  \ref{fig:init}). Indeed, the value $C_W=2$ in Figure \ref{fig:init} precisely corresponds to the initialization \eqref{eq:He-gen-intro} since we've normalized the aggregation operator to ensure that $\lambda_1(\mP)=1$. Without this normalization, the homogeneity of ReLU networks means that omitting the $\lambda_1(\mP)^2$ term in the denominator of \eqref{eq:He-gen-intro} causes both activations and gradients to grow/decay like $\lambda_1(\mP)^{-\text{depth}}$. 

\begin{pt}\label{pt:he-init}
In practice, the spectrum of the aggregation operator depends on the input graph and hence varies between datapoints. However, Theorem \ref{thm:He-init-inf} shows that it is important to ensure that the top eigenvalue $\lambda_1(\mP)$ is (approximately) the same. Our practical recommendation is therefore to always normalize message passing operators to have a top eigenvalue equal to a fixed constant (say $1$) and then use He-initialization. This can be done by using normalized, rather than un-normalized, adjacency matrices, giving a potential explanation for why this is common in practice. 
\end{pt}

\begin{figure}
  \centering              
    \includegraphics[width=.7\linewidth]{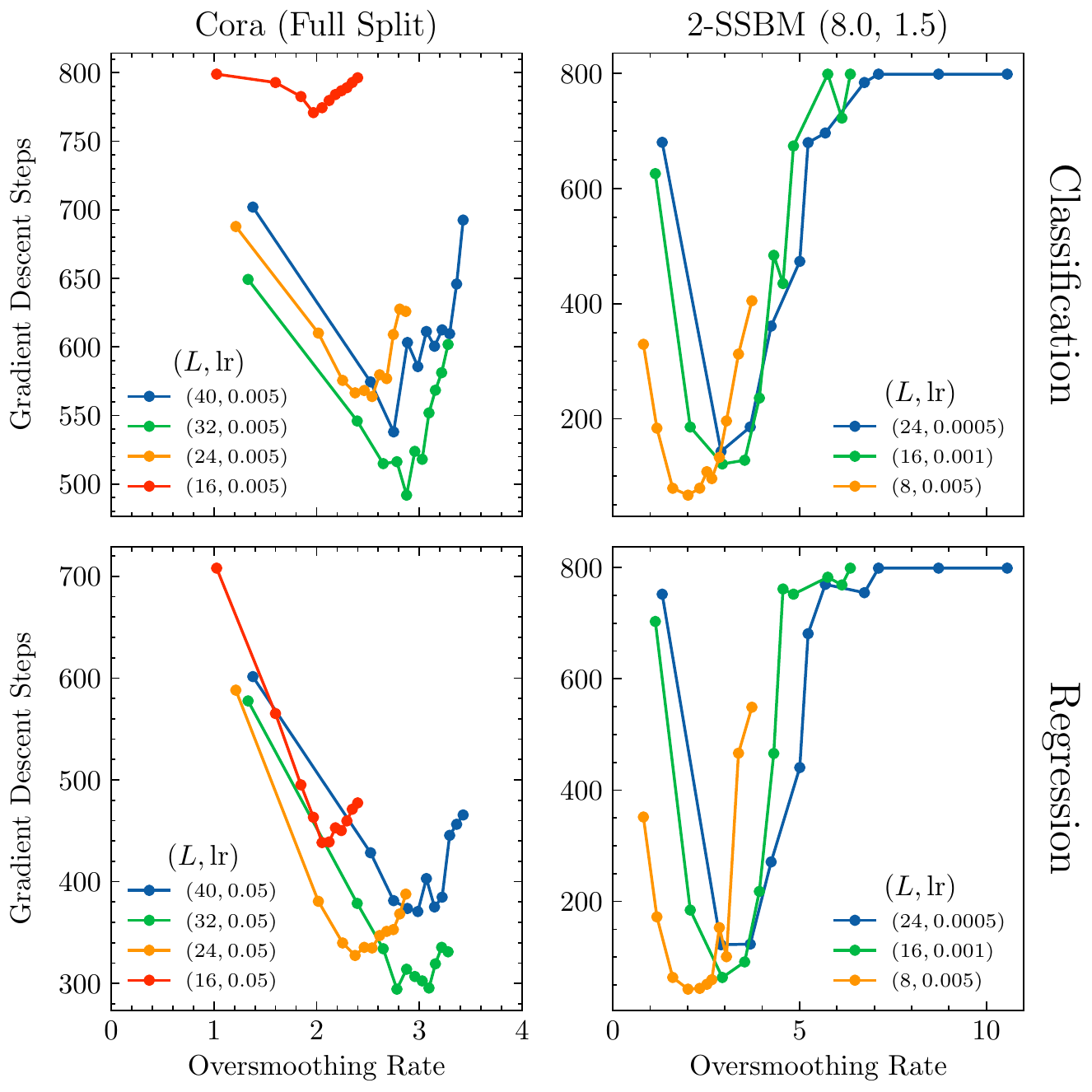} \\
  \caption{{\footnotesize Number of GD steps with constant step size $\mathrm{lr}$ required for depth $L$ GNNs with residual aggregation operators to reach the training accuracy of the best linear classifier, as a function of the oversmoothing rate at initialization. Models are trained on data from Cora and a two-class stochastic block model (2-SSBM), treated as either classification (cross-entropy loss) or regression (MSE with one-hot labels). Oversmoothing rates are varied by setting $C_W^{(\ell)}=2.25/(n_{\ell-1}\lambda_1(\mathcal{P})^2)$ and scanning over  values $t\in\{0.0, 0.1,\ldots,1.0\}$ for the residual aggregation strength. In all cases we use the symmetric degree-normalized adjacency matrix with self-loops $\mP$ as the base aggregation operator and take hidden layers widths to be $128$ for Cora and $64$ for SSBM. All models are allowed to train for a maximum of 800 steps. Oversmoothing rates for each depth and residual aggregation strength ($L$, $t$) are computed as an empirical average over all vertices in 25 random initializations. We conservatively report 800 steps for settings of ($L$, $t$) that never reached the training accuracy threshold. 
  }}
  \label{fig:os-bad}
\end{figure}

\subsection{Avoiding Oversmoothing at Init}\label{sec:os-inf} In a deep GNN where the same aggregation operator $\mP$ is applied at every layer it is natural to expect that for any network input $x$ and any neuron  $i=1,\ldots, n_L$ the resulting vector of pre-activations
\[
z_i^{{(L)}}(x) = \lr{z_{i;v}^{(L)}(x),\quad v\in V}\in \R^{|V|}
\]
in the final layer will be almost entirely contained in the top eigenspace of $\mP$. This effect is called oversmoothing and was first systematically pointed out in \cite{li2018deeper}. Building on the definition \eqref{eq:r-def} of the oversmoothing ratio, we measure the rate of oversmoothing in the network output by studying
\begin{equation}
    \text{oversmoothing rate}:=-\log\lr{1-r^{(L)}},
\label{eq:oversmoothing-rate}
\end{equation}
where we remind the reader that
\[
r^{(L)}=\E{\frac{1}{n_L}\sum_{i=1}^{n_L} \norm{\Pi_1 z_i^{(L)}(x)}^2} \bigg/ \E{\frac{1}{n_L}\sum_{i=1}^{n_L} \norm{z_i^{(L)}(x)}^2}
\]
is the oversmoothing ratio, which is independent of the network input $x$. Since $r^{(L)}\in [0,1]$, the oversmoothing rate diverges to $+\infty$ as more and more of the relative $\ell_2$-norm of the network outputs is contained in the top eigenspace of $\mP$.

As we illustrate in Figure \ref{fig:os-bad}, high oversmoothing rates at initialization lead to a significant slowdowns in network training (\S \ref{sec:lit-rev} for a discussion of empirical and theoretical related work on oversmoothing). Thus, it is practically important to identify strategies for mitigating oversmoothing, and we now present two theorems in this direction. The first is a negative result in Theorem \ref{thm:os-inf}, which shows that any initialization leads to oversmoothing at large depth in randomly initialized ReLU GNNs that do not have residual connections and use a fixed aggregation operator. The second, Theorem \ref{thm:res-os-inf}, is a positive result in which we prove that by simply using residual aggregation operators (see \eqref{eq:p-res-def}) oversmoothing rates can be effectively be controlled.

\begin{figure}
  \centering              
    \includegraphics[width=.7\linewidth]{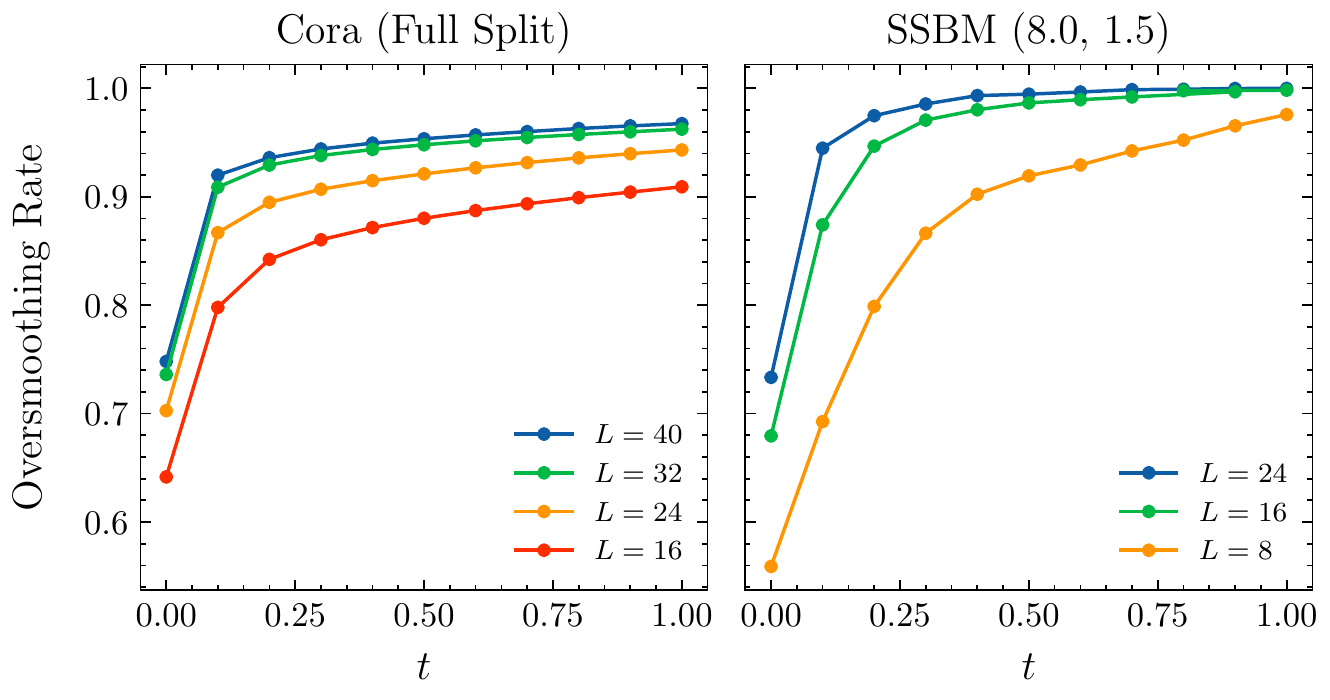} \\
  \caption{\footnotesize For each depth $L$ model reported in the classification portion of Figure \ref{fig:os-bad}, we plot the oversmoothing rate at initialization as a function of the residual aggregation strength $t$.} 
  \label{fig:os-fix}
\end{figure}

\begin{theorem}[Informal Consequence of Theorem \ref{thm:oversmoothing}]\label{thm:os-inf}
Consider a randomly initialized $L$-layer fully connected ReLU GNN with no residual connections and a fixed aggregation operator satisfying \eqref{eq:P-as1}-\eqref{eq:P-as4}. The oversmoothing ratio (see \eqref{eq:r-def}) converges to $1$ exponentially in $L$:
\begin{equation}\label{eq:r-est-inf-lb}
 r^{(L)} \geq 1-\exp\left[-\text{const}\times L\right].
\end{equation}
\end{theorem}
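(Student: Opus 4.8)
The plan is to track, layer by layer, how much of the expected squared $\ell_2$-norm of the pre-activations lives inside the top eigenspace $\Pi_1$ of $\mathcal P$ versus its orthogonal complement $\Pi_1^\perp = I - \Pi_1$. Concretely, for each layer $\ell$ introduce the two scalars
\[
A^{(\ell)} := \E{\frac{1}{n_\ell}\sum_{i=1}^{n_\ell}\norm{\Pi_1 z_i^{(\ell)}(x)}^2}, \qquad B^{(\ell)} := \E{\frac{1}{n_\ell}\sum_{i=1}^{n_\ell}\norm{\Pi_1^\perp z_i^{(\ell)}(x)}^2},
\]
so that $r^{(\ell)} = A^{(\ell)}/(A^{(\ell)}+B^{(\ell)})$. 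The goal is a recursion showing $B^{(\ell+1)} \le \mu\, (A^{(\ell)}+B^{(\ell)})\cdot(\text{something}) $ — more precisely, that the ratio $B^{(\ell+1)}/A^{(\ell+1)}$ contracts geometrically. First I would use the update $z^{(\ell+1)} = \mathcal P\,\sigma(z^{(\ell)})\,W^{(\ell+1)}$ and take expectation over the fresh Gaussian weights $W^{(\ell+1)}$, which are independent of $z^{(\ell)}$. Since $\E{W^{(\ell+1)}(W^{(\ell+1)})^\top} = C_W^{(\ell+1)} n_{\ell+1} \cdot (\text{identity on the } V\text{-index is untouched})$, averaging over the weights collapses the $n_{\ell+1}$ neuron index and yields, for any fixed projection $Q \in \{\Pi_1,\Pi_1^\perp\}$,
\[
\E{\frac{1}{n_{\ell+1}}\sum_{i}\norm{Q z_i^{(\ell+1)}}^2} = C_W^{(\ell+1)} n_\ell \cdot \frac{1}{n_\ell}\sum_{j}\E{\norm{Q\,\mathcal P\,\sigma(z_j^{(\ell)})}^2} = C_W^{(\ell+1)} n_\ell \cdot \frac{1}{n_\ell}\sum_j \E{\norm{\mathcal P Q\,\sigma(z_j^{(\ell)})}^2},
\]
using that $\mathcal P$ commutes with $Q$ (both are functions of the same symmetric operator). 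This is the clean step: it reduces everything to understanding $\E{\norm{\mathcal P\Pi_1 w}^2}$ and $\E{\norm{\mathcal P\Pi_1^\perp w}^2}$ where $w = \sigma(z_j^{(\ell)})$ is the vector of post-activations at a single neuron across vertices.

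The key spectral input is that $\mathcal P$ restricted to $\Pi_1^\perp$ has operator norm $\lambda_2 := \abs{\lambda_2(\mathcal P)} < \lambda_1(\mathcal P) =: \lambda_1$, so $\norm{\mathcal P\Pi_1^\perp w}^2 \le \lambda_2^2 \norm{\Pi_1^\perp w}^2$, while on the top eigenspace $\norm{\mathcal P\Pi_1 w}^2 = \lambda_1^2\norm{\Pi_1 w}^2$ (assuming, as one may after the normalization discussed in the paper, that the top eigenspace is a genuine eigenspace; if $\mathcal P$ has a signed top eigenvalue of the same magnitude the argument only needs $\norm{\mathcal P\Pi_1 w} = \lambda_1\norm{\Pi_1 w}$, which still holds). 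Plugging these in gives
\[
B^{(\ell+1)} \le C_W^{(\ell+1)} \lambda_2^2 \sum_j \E{\norm{\Pi_1^\perp \sigma(z_j^{(\ell)})}^2}, \qquad A^{(\ell+1)} = C_W^{(\ell+1)} \lambda_1^2 \sum_j \E{\norm{\Pi_1 \sigma(z_j^{(\ell)})}^2}.
\]
The remaining task is to relate $\E{\norm{\Pi_1 \sigma(z^{(\ell)})}^2}$ and $\E{\norm{\Pi_1^\perp \sigma(z^{(\ell)})}^2}$ back to $A^{(\ell)}$ and $B^{(\ell)}$, i.e. to control how the (nonlinear, coordinatewise) ReLU redistributes energy between the two eigenspaces. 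The essential point is that ReLU cannot move too much mass from the top eigenspace into its complement: because $\Pi_1$ is the projection onto the top eigenspace of a symmetric nonnegative matrix, its range contains a positive (Perron) vector, and for a vector $z$ concentrated near that positive cone, $\sigma(z) \approx z$ up to controlled corrections. Quantitatively, I would show $\E{\norm{\Pi_1^\perp \sigma(z^{(\ell)})}^2} \le c_1 B^{(\ell)} + c_2\sqrt{A^{(\ell)}B^{(\ell)}}$ or, more simply after Cauchy–Schwarz, a bound of the form $\E{\norm{\Pi_1^\perp \sigma(z^{(\ell)})}^2} \le \E{\norm{\Pi_1^\perp z^{(\ell)}}^2} + (\text{cross terms bounded by }\norm{\Pi_1^\perp}_{op}\cdot\norm{\Pi_1}_{op}\cdot\dots)$ — this is where the nonlinearity is handled, using that $\norm{\sigma(z)}\le \norm{z}$ componentwise together with the fact that on the range of $\Pi_1$ there is a uniformly positive vector so that $\Pi_1 z$ having large norm forces $z$ itself to be "mostly positive" in the relevant sense. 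Combining, one gets $B^{(\ell+1)}/A^{(\ell+1)} \le (\lambda_2/\lambda_1)^2\cdot\big(B^{(\ell)}/A^{(\ell)} + O(\sqrt{B^{(\ell)}/A^{(\ell)}})\big)$, which, once $B^{(\ell)}/A^{(\ell)}$ is small, contracts at rate $(\lambda_2/\lambda_1)^2 < 1$; iterating gives $B^{(L)}/A^{(L)} \le \exp[-cL]$ and hence $r^{(L)} = 1/(1 + B^{(L)}/A^{(L)}) \ge 1 - \exp[-cL]$ with $c$ depending only on $\log(\lambda_1/\lambda_2)$ and the Perron vector geometry of $\mathcal P$.

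The main obstacle I expect is exactly the control of the ReLU's interaction with the spectral decomposition — steps involving $\E{\norm{\Pi_1^\perp \sigma(z^{(\ell)})}^2}$ versus $\E{\norm{\Pi_1^\perp z^{(\ell)}}^2}$. Unlike the linear weight-averaging step, ReLU does not commute with $\Pi_1$ or $\Pi_1^\perp$, and a priori it could pump energy from the top eigenspace into its complement. The resolution must exploit that the top eigenvector of a symmetric nonnegative (ideally irreducible, e.g. connected-graph) aggregation operator is entrywise positive, so when $z^{(\ell)}$ has most of its mass along that positive direction it is coordinatewise nonnegative on most vertices and ReLU acts nearly as the identity there; the correction from the "bad" coordinates is controlled by $B^{(\ell)}$, which is already small by the inductive hypothesis. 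Making this rigorous — establishing a clean base case for the induction (the first layer, where $z^{(1)} = xW^{(1)}$ need not be concentrated in the top eigenspace at all, so one may first need a short argument that after $O(1)$ layers the ratio $B/A$ has dropped below the threshold where the contraction kicks in), and getting explicit constants — is the crux; everything else is bookkeeping with Gaussian second moments.
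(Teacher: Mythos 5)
Your scaffolding is right---reduce to tracking $A^{(\ell)}=\tr(K^{(\ell)}\Pi_1)$ and $B^{(\ell)}=\tr(K^{(\ell)}\Pi)$ with $\Pi=I-\Pi_1$, average over the fresh Gaussian weights, and use the spectral gap to separate the two projections---and you have correctly identified the nonlinearity as the crux. But the route you propose through it does not close, and you say so yourself: your proposed bound $\E{\norm{\Pi\,\sigma(z^{(\ell)})}^2}\le c_1 B^{(\ell)}+c_2\sqrt{A^{(\ell)}B^{(\ell)}}$ only contracts once $B/A$ is already small, which forces a base-case/bootstrap step you never supply, and there is no reason the first-layer ratio $B^{(1)}/A^{(1)}$ is small (the input $x$ is arbitrary). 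Moreover, even the $\sqrt{AB}$ bound is only sketched; it is not obvious how to prove it, and "ReLU acts nearly as the identity when $z$ is mostly positive" is an approximation-style argument that would need quantitative control over how often $z$ leaves the positive cone, which you have not set up.

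What the paper actually proves is the clean, unconditional monotone contraction
\[
2\,\E{\norm{\Pi\,\sigma(z^{(\ell)})}^2} \;\le\; \tr(K^{(\ell)}\Pi),
\]
with no cross terms and no smallness hypothesis, so the geometric decay $B^{(\ell+1)}\le \tfrac12 C_W^{(\ell+1)} n_\ell \lambda_2^2\, B^{(\ell)}$ and growth $A^{(\ell+1)}\ge \tfrac12 C_W^{(\ell+1)} n_\ell \lambda_1^2\, A^{(\ell)}$ hold from layer one. The mechanism is a sign game rather than an approximation: expand $\E{\norm{\Pi\sigma(z)}^2}=\sum_{v_1,v_2}\Pi_{v_1v_2}\E{\sigma(z_{v_1})\sigma(z_{v_2})}$, observe that the diagonal entries contribute exactly $\tfrac12 \Pi_{vv}K_{vv}$ (Gaussian symmetry), and then invoke two facts you did not isolate: \emph{(i)} for jointly Gaussian $(z_1,z_2)$ one always has $\E{\sigma(z_1)\sigma(z_2)}\ge\tfrac12\E{z_1z_2}$ (the paper's Lemma \ref{lem:sigma-trick}), and \emph{(ii)} because $\Pi_1$ is the projection onto the Perron eigenspace of a symmetric nonnegative matrix, every off-diagonal entry of $\Pi=I-\Pi_1$ satisfies $\Pi_{v_1v_2}=-(\pi_1)_{v_1}(\pi_1)_{v_2}\le 0$ (Lemmas \ref{lem:pi-pos} and \ref{L:interference}). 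Multiplying (i) by the non-positive coefficient in (ii) flips the inequality in exactly the direction needed, and summing diagonal plus off-diagonal gives the displayed bound. This eliminates the cross terms entirely and hence removes the base-case obstruction that blocks your version. If you want to repair your proof, the thing to go after is precisely this pair: the ReLU correlation inequality and the off-diagonal sign structure of $I-\Pi_1$; the "mostly in the positive cone" heuristic, while in the right spirit, is not the lever the argument turns on.
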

Lower bounds that are superficially similar  to  \eqref{eq:r-est-inf-lb} were obtained in prior work (see Corollary 2 in \cite{oono2019graph} and Theorem 3.4 in the followup \cite{cai2020note}). Specifically, these articles show that the distance between the layer $L$ pre-activations and their projection onto the top eigenspace of the aggregation operator satisfies the deterministic estimate
\begin{equation}\label{eq:prior-os}
\frac{1}{n_L}\sum_{i=1}^{n_L}\norm{z_i^{(L)}(x)-\Pi_1 z_i^{(L)}(x)}^2 \leq \text{const}\times \prod_{\ell=1}^L \frac{\norm{W^{(\ell)}}}{\lambda_1(\mathcal P)}
\end{equation}
where $\norm{W^{(\ell)}}$ is the spectral norm of $W^{(\ell)}$. For the randomly initialized weights we consider in the present article the result \eqref{eq:prior-os} is significantly weaker than our Theorem \ref{thm:He-init-inf}. To see this, recall from \S \ref{sec:evgp-inf} that to avoid exponential outputs we must initialize $W^{(\ell)}$ as in Theorem \ref{thm:He-init-inf}. Averaging \eqref{eq:prior-os} with respect to this distribution the left hand side remains bounded, independent of network depth. The normalized spectral norms $\norm{W^{(\ell)}}/\lambda_1(\mP)$, however, concentrate around the constant $2$ with high probability (this is a consequence of the $\sqrt{2}$ in the weight variance due to the ReLU and an extra $\sqrt{2}$ in the top singular value of Gaussian random matrices from the celebrated Bai-Yin Theorem \cite{bai2008limit} or Theorem 1.1 in \cite{bandeira2016sharp}). Hence, the upper bound in \eqref{eq:prior-os} is exponentially growing in the network depth (this is reflected for instance in Figure 2 of \cite{oono2019graph}) and quickly becomes vacuous.

While Theorem \ref{thm:os-inf} is formulated for GNNs with a fixed aggregation operator, the more general Theorem \ref{thm:oversmoothing} on which it is based also applies when in layer $\ell$ we use a layer-dependent \textit{residual aggregation operator} 
\begin{equation}\label{eq:p-res-def}
    \mP^{(\ell)}:=\lr{1-t_\ell}I + t_\ell \mP,\qquad t_\ell \in [0,1],
\end{equation}
where $\mP$ is a fixed aggregation operator satisfying \eqref{eq:P-as1}-\eqref{eq:P-as4}. A direct consequence of Theorem \ref{thm:oversmoothing} is that the rate of oversmoothing is exponential in the sum of the $t_\ell$'s:

\begin{theorem}[Informal Statement of Corollary \ref{cor:res-os} to Theorem \ref{thm:oversmoothing}]\label{thm:res-os-inf} Consider a randomly initialized $L$-layer fully connected ReLU GNN with residual aggregation operators as in \eqref{eq:p-res-def} built on a fixed aggregation operator $\mP$ satisfying \eqref{eq:P-as1}-\eqref{eq:P-as4}. Then
\[
1-\exp\left[-c_\mP \sum_{\ell=1}^{L-1} t_{\ell}\right] \leq r^{(L)}\leq \min \set{1,r^{(1)}\exp\left[C_\mP \sum_{\ell=1}^{L-1} t_{\ell}\right]},
\]
where $c_\mP, C_\mP$ are certain explicit positive constants depending on the spectrum of $\mP$. 
\end{theorem}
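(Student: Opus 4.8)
The plan is to read the corollary off from Theorem~\ref{thm:oversmoothing}, whose statement already handles GNNs with arbitrary layer-dependent aggregation operators and provides matching upper and lower bounds for $r^{(L)}$ in terms of the base value $r^{(1)}$ and the spectra of $\mP^{(1)},\dots,\mP^{(L-1)}$; concretely, the oversmoothing is governed by the product $\prod_{\ell=1}^{L-1}\rho_\ell$, where $\rho_\ell$ is the squared ratio of the sub-dominant spectral radius of $\mP^{(\ell)}$ to its top eigenvalue, up to multiplicative constants depending only on $\mP$ and the ReLU, which will be absorbed into $c_\mP, C_\mP$. Thus the only work specific to the residual family $\mP^{(\ell)}=\lr{1-t_\ell}I+t_\ell\mP$ is: (i) check admissibility and compute the relevant spectral quantities as functions of $t_\ell$; and (ii) prove a sharp single-variable estimate showing each $\rho_\ell$ is $\exp\!\left[\pm\,\Theta\!\lr{t_\ell}\right]$, after which the two bounds follow by taking products.

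First I would verify admissibility. Each $\mP^{(\ell)}=\lr{1-t_\ell}I+t_\ell\mP$ is symmetric with non-negative entries, being a convex combination of $I$ and $\mP$ with $t_\ell\in[0,1]$, and it is diagonalized by the eigenbasis of $\mP$; hence \eqref{eq:P-as1}--\eqref{eq:P-as4} hold with the \emph{same} projection $\Pi_1$ as for $\mP$, and Theorem~\ref{thm:oversmoothing} applies. The spectral dictionary is: if $\lambda$ ranges over the eigenvalues of $\mP$, then $\mP^{(\ell)}$ has eigenvalues $1+t_\ell\lr{\lambda-1}$; in particular its top eigenvalue is $\lambda_1\lr{\mP^{(\ell)}}=1+t_\ell\lr{\lambda_1\lr{\mP}-1}>0$ on $[0,1]$, and its sub-dominant spectral radius $\mu\lr{\mP^{(\ell)}}:=\max\set{\abs{\lambda'}:\lambda'\in\mathrm{spec}\lr{\mP^{(\ell)}},\ \lambda' \text{ outside the top eigenspace}}$ equals $\max_\lambda\abs{1+t_\ell\lr{\lambda-1}}$ with $\lambda$ running over the second-largest eigenvalue $\lambda_2\lr{\mP}$ and the smallest eigenvalue $\lambda_{\min}\lr{\mP}$ (since $\lambda\mapsto\lr{1+t_\ell\lr{\lambda-1}}^2$ is convex, the maximum over the non-top spectrum is attained at one of these two extreme points). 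Consequently
\[
\rho_\ell:=\frac{\mu\lr{\mP^{(\ell)}}^2}{\lambda_1\lr{\mP^{(\ell)}}^2}
\]
is an explicit piecewise-rational function of $t_\ell\in[0,1]$ with $\rho_\ell=1$ at $t_\ell=0$.

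The crux is the elementary claim that there exist positive constants $c_\mP, C_\mP$, depending only on the spectrum of $\mP$, with
\[
\exp\!\left[-C_\mP\, t\right]\;\leq\;\frac{\mu\lr{\lr{1-t}I+t\mP}^2}{\lambda_1\lr{\lr{1-t}I+t\mP}^2}\;\leq\;\exp\!\left[-c_\mP\, t\right],\qquad t\in[0,1].
\]
For the upper bound I would show that each of the two terms $\lr{1+t\lr{\lambda-1}}^2/\lr{1+t\lr{\lambda_1\lr{\mP}-1}}^2$, $\lambda\in\set{\lambda_2\lr{\mP},\lambda_{\min}\lr{\mP}}$, is at most $e^{-c_\mP t}$ on $[0,1]$: the numerator factor $1+t\lr{\lambda-1}$ is affine, equals $1$ at $t=0$, and has strictly negative slope $\lambda-1$ since $\lambda<\lambda_1\lr{\mP}$, while the denominator factor is $\geq 1$; using in addition that the non-top spectrum of $\mP$ stays bounded away from $-\lambda_1\lr{\mP}$, the function $t\mapsto\lr{1+t\lr{\lambda-1}}^2e^{c_\mP t}$ can be made $\leq 1$ on $[0,1]$ for $c_\mP$ small in terms of $\lambda_1\lr{\mP}$, $\lambda_2\lr{\mP}$, $\lambda_{\min}\lr{\mP}$ --- a short calculus check on a quadratic-times-exponential. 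For the lower bound it suffices to retain only the $\lambda_2\lr{\mP}$ term and bound it below by $e^{-C_\mP t}$ by the same kind of estimate. Taking products over $\ell=1,\dots,L-1$ pinches $\prod_{\ell=1}^{L-1}\rho_\ell$ between $\exp\!\left[-C_\mP\sum_{\ell=1}^{L-1}t_\ell\right]$ and $\exp\!\left[-c_\mP\sum_{\ell=1}^{L-1}t_\ell\right]$; substituting into the conclusion of Theorem~\ref{thm:oversmoothing} and invoking the trivial bound $r^{(L)}\leq 1$ to produce the $\min\set{1,\cdot}$ yields precisely the stated inequalities.

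The main obstacle is this single-variable squeeze. That $\rho_\ell$ interpolates between $1$ (at $t_\ell=0$) and something of size $\lr{\lambda_2\lr{\mP}/\lambda_1\lr{\mP}}^2$ (at $t_\ell=1$) is easy, but promoting this to the clean exponential-in-$t$ bound is delicate because $t\mapsto\mu\lr{\lr{1-t}I+t\mP}$ is only piecewise affine (a maximum of absolute values of affine functions) and hence a priori non-monotone on $[0,1]$, and because the lower bound can genuinely degenerate: if the non-top spectrum of $\mP$ reduces to a single negative eigenvalue, $\rho_\ell$ vanishes at an interior value of $t_\ell$, forcing $r^{(L)}=1$ there, so this case must be noted separately (both displayed inequalities hold trivially when $r^{(L)}=1$). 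With that caveat, and with Theorem~\ref{thm:oversmoothing} in hand, the remainder is bookkeeping.
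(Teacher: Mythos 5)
Your high-level plan -- read the corollary off of Theorem~\ref{thm:oversmoothing} by (i) checking that the residual family $\mP^{(\ell)}=(1-t_\ell)I+t_\ell\mP$ satisfies \eqref{eq:P-as1}--\eqref{eq:P-as4} with the same top eigenspace, (ii) computing its eigenvalues as $1+t_\ell(\lambda-1)$, and then (iii) running a one-variable estimate turning a ratio of affine functions of $t_\ell$ into $\exp[\pm\Theta(t_\ell)]$ -- is the same as the paper's, and your treatment of the lower bound on $r^{(L)}$ is essentially correct (the paper bounds $\lambda_2(\mP^{(\ell)})/\lambda_1(\mP^{(\ell)}) \le 1 + \frac{\lambda_2(\mP)-\lambda_1(\mP)}{\max\{1,\lambda_1(\mP)\}}t_\ell$ and then uses $1+x\le e^x$; your ``quadratic-times-exponential'' calculus check is an equivalent way to see the same thing).

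There is, however, a genuine gap in your derivation of the \emph{upper} bound on $r^{(L)}$. You claim both displayed inequalities of the corollary follow from a two-sided pinch on a single quantity
\[
\rho_\ell = \frac{\mu(\mP^{(\ell)})^2}{\lambda_1(\mP^{(\ell)})^2},
\]
where $\mu$ is the sub-dominant spectral radius, and then ``substituting into the conclusion of Theorem~\ref{thm:oversmoothing}.'' But the two inequalities of Theorem~\ref{thm:oversmoothing} are governed by two genuinely different spectral quantities: the lower bound \eqref{eq:os-rate-lb} involves $\prod_{\ell'}\lr{\lambda_2(\mP^{(\ell')})/\lambda_1(\mP^{(\ell')})}^2$, while the upper bound \eqref{eq:os-rate-ub} involves the \emph{condition number} $\prod_{\ell'}\lr{\lambda_1(\mP^{(\ell')})/\lambda_{\abs{V}}(\mP^{(\ell')})}^2$, where $\lambda_{\abs{V}}$ is the smallest eigenvalue. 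These are not reciprocals of one another (unless $\abs{V}=2$), and in the positive-definite case $\mu(\mP^{(\ell)})=\lambda_2(\mP^{(\ell)})>\lambda_{\abs{V}}(\mP^{(\ell)})$, so your $\rho_\ell$ never sees $\lambda_{\abs{V}}$. Consequently the lower bound $\rho_\ell \ge e^{-C_\mP t_\ell}$ you propose to prove is never used by Theorem~\ref{thm:oversmoothing}: what you actually need is the separate (structurally similar, but distinct) estimate $\lambda_1(\mP^{(\ell)})/\lambda_{\abs{V}}(\mP^{(\ell)}) \le \exp\!\bigl[\tfrac{1}{2}C_\mP t_\ell\bigr]$ with $C_\mP$ involving $\lambda_1(\mP)-\lambda_{\abs{V}}(\mP)$; plugging that into \eqref{eq:os-rate-ub} gives $r^{(L)}\le\min\{1,\,r^{(1)}\exp[C_\mP\sum_\ell t_\ell]\}$. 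This is exactly the computation in the paper's proof of Corollary~\ref{cor:res-os}, and it is missing from your plan.

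A smaller point: Corollary~\ref{cor:res-os} explicitly assumes $\mP$ is positive definite, which is what guarantees $\lambda_{\abs{V}}(\mP^{(\ell)})>0$ (so the condition number is finite), makes $\mu=\lambda_2$, and rules out the degenerate-negative-eigenvalue scenario you flag at the end. You should invoke that hypothesis rather than treat the degenerate case as a genuine obstruction. Finally, to turn $\lr{1+(\tfrac{1}{r^{(1)}}-1)\prod\rho_\ell}^{-1}$ into the clean $1-\exp[-c_\mP\sum t_\ell]$ of the informal statement you do need to absorb $\tfrac{1}{r^{(1)}}-1$ somewhere (the paper does the same silently), so it is worth flagging that $c_\mP$ is only ``purely spectral'' up to this input-dependent factor.
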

\begin{pt}\label{pt:res-ag}
In practice, using a fixed aggregation operator in deep fully connected ReLU GNNs is undesirable and results in many low frequency output features at initialization. This is illustrated in Figure \ref{fig:os-bad}. Instead, in an $L$ layer GNN to avoid oversmoothing  one may use residual aggregation operators of the form
\[
\mathcal P^{(\ell)}=\lr{1-\frac{\text{const}}{L}}I + \frac{\text{const}}{L} \mP,
\]
where $\mP$ is a fixed aggregation operator satisfying \eqref{eq:P-as1}-\eqref{eq:P-as4}. Residual aggregation operators are beneficial for node classification and node regression tasks (see Figures \ref{fig:cc-bad}, \ref{fig:cc-fix}).
\end{pt}

We wish to emphasize that residual aggregation operators and their variants are certainly not new and were proposed in combination with a variety of other interesting practical modifications of GNNs in prior work such as \cite{chen2020simple}. From this point of view, the main contribution of the present article is that we prove that, among other possible salutary effects,  residual aggregation operators  alleviate oversmoothing at a specific rate that depends on the choice of the $t_\ell$'s. We refer the reader to \S \ref{sec:lit-rev} for a more thorough discussion of related work.

\begin{figure}
  \centering              
    \includegraphics[width=.7\linewidth]{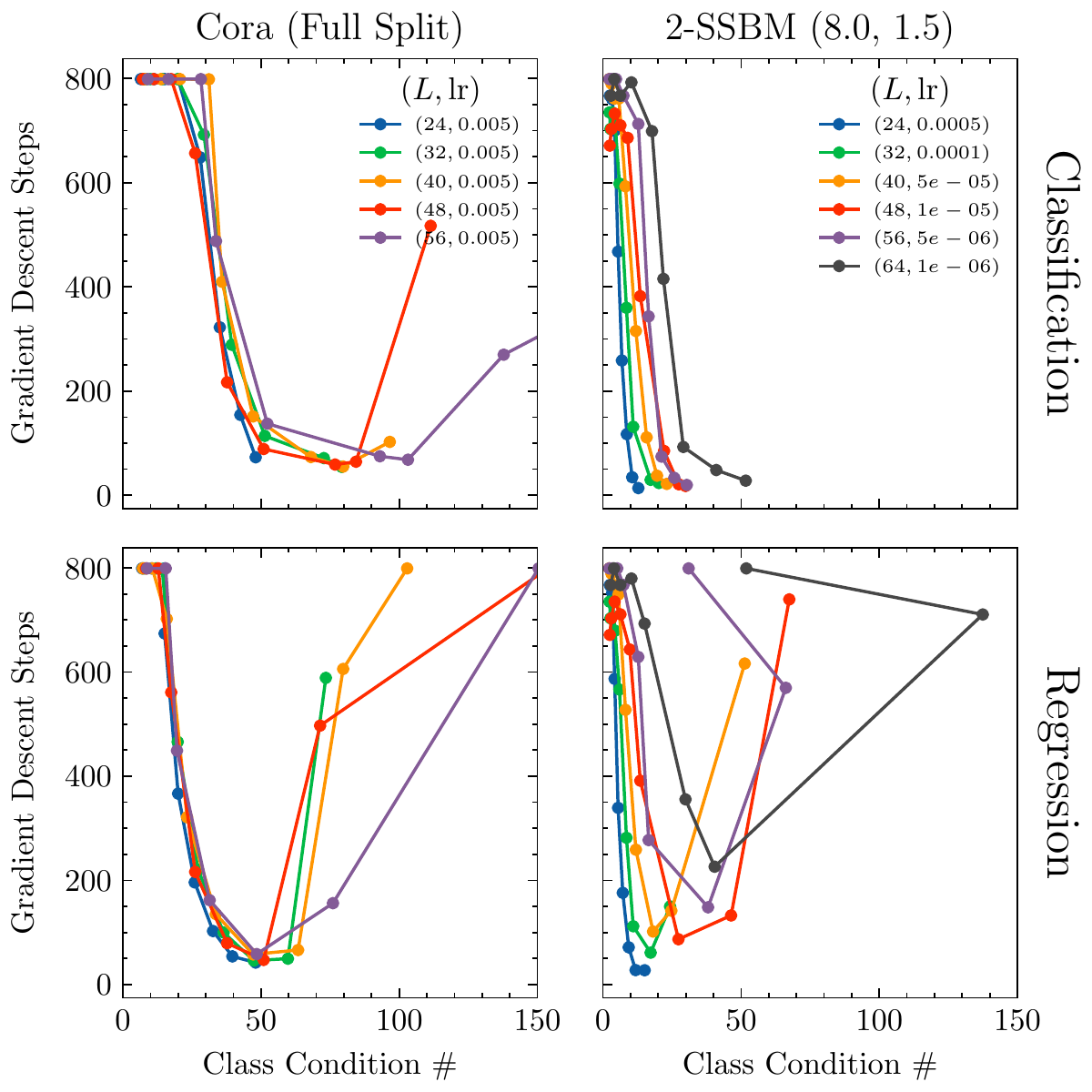} \\
  \caption{
  \footnotesize Number of GD steps with constant step size $\mathrm{lr}$ required for depth $L$ GNNs equipped with residual connections and residual aggregation operators to reach the training accuracy of the best linear classifier, as a function of the class condition number at initialization. Models are trained on data from Cora and a two class stochastic block model (SSBM) graph, treated as either classification (cross-entropy loss) or regression (MSE with one-hot labels). Class condition numbers are varied by setting $C_W^{(\ell)}=1/(n_{\ell-1}\lambda_1(\mathcal{P})^2)$ and scanning over values $\beta\in\{0.1,0.2,\ldots,0.9\}$ for the residual connection strength. In all cases we use the symmetric degree-normalized adjacency matrix with self-loops $\mP$ as the base aggregation operator and take hidden layers widths to be $128$ for Cora and $64$ for SSBM. To mitigate oversmoothing, at each depth $L$ and residual connection strength $\beta$, we scan over models with residual aggregation operators $(1-t)I + t \mP$ with $t\in \{0.1, 0.2, 0.3, 0.4, 0.6, 0.8\}$, reporting results only for the model that trained in the fewest number of GD steps. All models are allowed to train for a maximum of 800 steps Class condition numbers for each setting of depth, residual connection strength, and residual aggregation strength ($L$, $\beta$, $t$) are computed as an empirical average over all vertices in 25 random initializations. For each initialization and setting of ($L$, $\beta$, $t$) that never reached the training accuracy threshold we conservatively report 800 steps.
  }
  \label{fig:cc-bad}
\end{figure}

\begin{figure}
  \centering              
    \includegraphics[width=.8\linewidth]{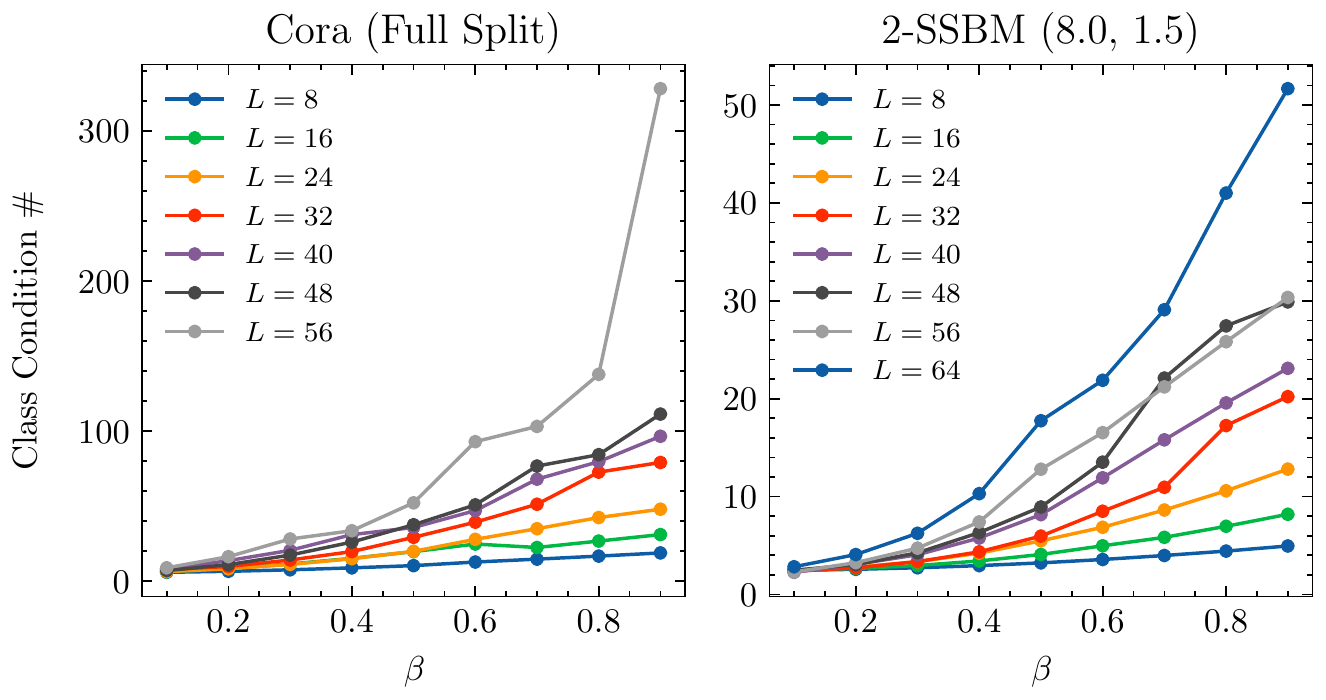} \\
  \caption{\footnotesize For each depth $L$ model reported in the classification portion of Figure \ref{fig:cc-bad}, we plot the class condition number at initialization as a function of the residual connection strength $\beta$.}
  \label{fig:cc-fix}
\end{figure}

\subsection{Avoiding Correlation Collapse at Init}\label{sec:ntk-inf}
Our final set of results are primarily empirical and give a simple procedure for combining initialization scheme and network architecture to ensure that hidden layer features remain well-conditioned at the start of training. To make this precise, let us fix a graph $\mG = (V,E)$ and an $L$-layer ReLU GNN. For a $k$ class classification (or one-hot regression) problem in which each vertex $v\in V$ is assigned a class $c(v)\in \set{1,\ldots, k}$ we fix a network input $x$ and consider the class-averaged features in the final hidden layer:
\[
z_c^{(L)}(x):=\frac{1}{\#\set{v\in V~|~c(v)=c}}\sum_{\substack{v\in V\\ c(v)=c}} z_{v}^{(L)}(x).
\]
We then measure correlation collapse by studying 
\begin{equation}
    \text{class condition number}:=\text{ condition number of }\lr{z_c^{(L)}(x),\,\, c=1,\ldots, k}\in \R^{k\times n_L},
\label{eq:class-condition}
\end{equation}
which becomes large when output features are highly correlated crosses different classes. As we illustrate empirically in Figure \ref{fig:cc-bad}, very small and very large values for the class condition number noticeably slow early training dynamics. 

We that a practical mechanism for overcoming correlation collapse, are skip connections such that the forward pass evaluated at a fixed network input $x$ is given by
\begin{equation}\label{eq:z-res-def}
z^{(\ell+1)}(x) =
\begin{cases}
 z^{(\ell)}(x)+\beta_\ell \mP^{(\ell)}\sigma\lr{z^{(\ell)}(x)}W^{(\ell+1)},&\quad \ell\geq 1\\
  x+\beta_1 \mP^{(1)}x W^{(1)},&\quad \ell=0
\end{cases}
\end{equation}
Here $\beta_\ell \geq 0$ is the strength of the residual connection at layer $\ell$ and $\mP^{(\ell)}$ are either fixed aggregation operators satisfying \eqref{eq:P-as1}-\eqref{eq:P-as4} or residual aggregation operators as in \eqref{eq:p-res-def}.

Our key insight is that smaller values for $\beta_\ell$ cause network outputs to be closer to network inputs and hence mitigate correlation collapse in practice. This is illustrated in Figure \ref{fig:cc-fix}. The reason this occurs is that the recursion \eqref{eq:z-res-def} immediately yields
\[
\norm{z^{(\ell+1)}(x)-z^{(\ell)}(x)}^2 \leq C \beta_\ell^2,
\]
in which the constant $C$ depends only on $\mathcal P^{(\ell)}$ and $\norm{z^{(\ell)}(x)}^2$. Iterating this inequality shows that if $\beta_\ell$ scale like $\text{const}/\sqrt{L}$ for a sufficiently small constant, then network inputs and outputs are very similar. 


\begin{pt}\label{pt:res-2}
In practice, using residual connections \eqref{eq:z-res-def} is important for encouraging trainability in deep GNNs (see Figures \ref{fig:ablation-performance}, \ref{fig:ablation-time}). That residual connection alleviate correlation collapse is illustrated in Figure \ref{fig:cc-fix}.
\end{pt}

\section{Review of Literature}\label{sec:lit-rev}
Before presenting our theoretical results in \S \ref{sec:results}, we place our work into context by reviewing several strands of related literature. The first is work on the so-called information propagation approach to choosing good network initializations in which the goal is essentially to avoid the exponential outputs and correlation collapse  failures modes from the Introduction, which are not specific to graph neural networks. This line of work started with the articles \cite{poole2016exponential,schoenholz2016deep}, which propose a principled way to select weight and bias variances so that the forward and backward pass at infinite width are well-behaved at initialization. These ideas were then extended to fully connected and residual networks with ReLU activations at finite width in \cite{hanin2018neural,hanin2018start}. For general activations at both infinite and finite width a more refined analysis of fully connected networks at the start of training was carried out in \cite{hanin2022random,roberts2022principles}. The case of transformers was recently taken up in \cite{dinan2023effective}. Unlike these articles, the present work applies to graph neural networks. In this context, equation (14) in the article \cite{jaiswal2022old} proposes an initialization scheme in which weight variances are re-scaled by the squared Hilbert-Schmidt norm of the aggregation operators. We find instead that it is better to use the spectral norm (see Theorems \ref{thm:He-init-inf} and \ref{thm:He-init} as well as Figure \ref{fig:evgp-bad}). 

The next line of prior work we briefly review explores and considers various solutions to the oversmoothing problem, which was originally identified in \cite{li2018deeper}. Articles such as, for instance \cite{chen2020measuring,yang2020revisiting}, propose regularizers that penalize oversmoothing. Moreover, works such as \cite{yan2022two,liu2020towards,lu2021skipnode,zheng2021deeper} introduce various reweighting and normalization schemes, while articles such as \cite{li2021training} explore using reversible connections. In a somewhat different vein, the article \cite{godwin2021simple} proposes a ``noisy nodes'' paradigm in which input vertex features (and sometimes edges and targets as well) are corrupted by noise and an auxiliary denoising autoencoder loss is added to present correlation collapse in the final hidden layer. A core finding of \cite{godwin2021simple} is that such regularization schemes allow for training of deeper GNNs and help alleviate oversmoothing. Finally, articles such as \cite{wu2022non} provide a non-asymptotic analysis for rates of oversmoothing and distinguish between its relation to denoising and debiasing. Prior work has also shown that oversmoothing occurs in certain \textit{linear} GNNs at finite width \cite{yang2020revisiting,keriven2022not} and in non-linear networks at infinite width \cite{sabanayagam2021new,huang2021towards} by studying the properties of the neural tangent kernel in GNNs.

The preceding articles on oversmoothing often propose using various kinds of skip connections in GNNs. We prove here that while skip connections are not necessary for avoiding oversmoothing (see Theorems \ref{thm:res-os-inf} and \ref{thm:oversmoothing} as well as Figure \ref{fig:os-fix}) they are indeed useful in avoiding correlation collapse (see \S \ref{sec:ntk-inf} and Figures \ref{fig:cc-bad}, \ref{fig:cc-fix}). Residual connections were already proposed and shown to have some empirical benefits, of instance, in the article \cite{kipf2016semi} that originally introduced graph neural networks (see (14) in Appendix B) and have been studied empirically in \cite{chen2020simple,li2021deepgcns}. The effect of residual connections on the graph NTK (i.e. the covariance matrix of gradients of the model output with respect to the parameters) was then analyzed more precisely in articles such as \cite{huang2021towards} which considered residual connections of the form 
\[
z^{(\ell+1)}(x) = \alpha z^{(\ell)}(x)+ \beta \mP^{(\ell)}\sigma\lr{z^{(\ell)}(x)}W^{(\ell+1)},
\]
where $\alpha,\beta\in [0,1]$. Theorems 2 and 3 in \cite{huang2021towards} show that if $\beta=\lr{1-\alpha^2}^{1/2} < 1$ then the NTK has is close to rank $1$ at large depth, leading to both oversmoothing and correlation collapse. However, in this article, we see that if $\alpha=1$ and $\beta$ scales like a constant times $L^{-1/2}$, then both these failure modes can be avoided. This is essentially the analog of the FixUp initialization introduced for general ResNets in \cite{zhang2019fixup}.

Another kind of residual connection proposed in the literature inserts a skip connection not between each consecutive pair of layers but rather connects the pre-activations in each layer directly to the network input. The article \cite{yang2020revisiting}, for instance, investigates empirically the use of such skip connections for signal denoising. In contrast, the work \cite{sabanayagam2021new} gives an graph-NTK point of view on the salutary effects of this sort of skip connection in avoiding correlation collapse for model gradients. 

Finally, we single out \cite{chen2020simple} as being particularly related to the present work. This article, though completely empirical, introduced a residual GNN architecture called GCNII, which simultaneously incorporates both the FixUp-type skip connections and the residual aggregation operators we study here. The resulting models were shown empirically to achieve strong results on Cora and related benchmark tasks. 

\section{Theoretical Results}\label{sec:results}
In this section, we state our main theoretical results. Specifically, in \S \ref{sec:evgp-res} we give a formal version of Theorem \ref{thm:He-init-inf} regarding the exponential output failure mode in GNNs. We also explain explain its precise connection to the exploding and vanishing gradient problem in \S \ref{sec:evgp}. Then, in \S \ref{sec:os-res}, we state a precise version of the oversmoothing estimates in Theorems \ref{thm:os-inf} and \ref{thm:res-os-inf}. Our results are all phrased in terms of the $\abs{V}\times \abs{V}$ matrix of layer $\ell$ pre-activation covariances 
\begin{align}
    \label{eq:K-def} K^{(\ell)}(x)&=\E{z_i^{(\ell)}(x)\lr{z_i^{(\ell)}(x)}^T}
\end{align}
and the analogous $\abs{V}n_0 \times \abs{V}n_0$ matrix of layer $\ell$ gradient covariances 
\begin{equation}\label{eq:G-def}
    G^{(\ell)}(x)=\E{D_x z_i^{(\ell)}(x)\lr{D_x z_i^{(\ell)}(x)}^T}
\end{equation}
in which we've denoted by $D_x$ the Jacobian with respect to $x$. Note that both $G^{(\ell)}(x)$ and $K^{(\ell)}(x,x')$ are independent of the neural index $i$ by symmetry. 

\subsection{Avoiding Exponential Outputs by Generalizing He-Initialization to GNNs}\label{sec:evgp-res}
Our first result gives recursive estimates in $\ell$ for the output and gradient covariances in a randomly initialized ReLU GNN at initialization. 
\begin{theorem}\label{thm:He-init}
Consider an $L$ hidden layer ReLU GNN at initialization as in \S \ref{sec:def}. For each $\ell=1,\ldots, L$, we have
\begin{equation}\label{eq:M-est}
A_\ell \tr(K^{(1)}(x)\Pi_1)\leq \tr(K^{(\ell+1)}(x))\leq A_\ell\tr(K^{(1)}(x)),    
\end{equation}
where 
\[
A_\ell:=\prod_{\ell'=1}^\ell a_{\ell'},\quad a_\ell =  \frac{1}{2}C_W^{(\ell+1)}n_{\ell}\lambda_1(\mP^{(\ell)})^2,
\]
and by convention $\lambda_1(\mP^{(0)})=1$. Moreover, the estimate \eqref{eq:M-est} holds also with $K^{(\ell)}(x)$ replaced by $G^{(\ell)}(x).$
\end{theorem}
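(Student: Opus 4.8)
The plan is to derive a single one-step recursion for the covariance matrices and iterate it. Fix a neuron index $i$ and a layer $\ell\ge 1$. Since $W^{(\ell+1)}$ is centered Gaussian with variance $C_W^{(\ell+1)}$ and is independent of everything in layers $\le \ell$, integrating it out in \eqref{eq:z-def} gives
\[
K^{(\ell+1)}(x) = C_W^{(\ell+1)} n_\ell\, \mP^{(\ell)}\Sigma^{(\ell)}\mP^{(\ell)},\qquad \Sigma^{(\ell)}_{uu'}:=\E{\sigma(z^{(\ell)}_{u;1}(x))\sigma(z^{(\ell)}_{u';1}(x))},
\]
with base case $K^{(1)}(x)=C_W^{(1)}xx^T$. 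Two facts then drive everything. First, writing $\sigma(a)=\tfrac12(a+|a|)$ and using that flipping the single column $W^{(\ell)}_{\cdot,1}$ negates the whole vector $z^{(\ell)}_{\cdot;1}(x)$ while preserving its law, the odd cross-moments vanish and one obtains the exact identity $\Sigma^{(\ell)}=\tfrac14 K^{(\ell)}+\tfrac14 M^{(\ell)}$ with $M^{(\ell)}_{uu'}=\E{|z^{(\ell)}_{u;1}|\,|z^{(\ell)}_{u';1}|}$; here $M^{(\ell)}\succeq 0$, $M^{(\ell)}_{uu'}\ge K^{(\ell)}_{uu'}$ entrywise, and $M^{(\ell)}_{vv}=K^{(\ell)}_{vv}$. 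Second, by \eqref{eq:P-as1}--\eqref{eq:P-as2} and Perron--Frobenius, $\lambda_1(\mP^{(\ell)})$ is the operator norm of $\mP^{(\ell)}$, one has $\mP^{(\ell)}\Pi_1=\Pi_1\mP^{(\ell)}=\lambda_1(\mP^{(\ell)})\Pi_1$, and $\Pi_1$ has nonnegative entries.

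For the upper bound I would use cyclicity together with $0\preceq(\mP^{(\ell)})^2\preceq\lambda_1(\mP^{(\ell)})^2 I$ and $\Sigma^{(\ell)}\succeq 0$ to get $\tr(K^{(\ell+1)})=C_W^{(\ell+1)}n_\ell\tr(\Sigma^{(\ell)}(\mP^{(\ell)})^2)\le C_W^{(\ell+1)}n_\ell\lambda_1(\mP^{(\ell)})^2\tr(\Sigma^{(\ell)})=a_\ell\tr(K^{(\ell)})$, the last step being $\tr\Sigma^{(\ell)}=\tfrac12\tr K^{(\ell)}$ (the diagonal of the identity above). For the lower bound I would pass to the top eigenspace: $\tr(K^{(\ell+1)}\Pi_1)=C_W^{(\ell+1)}n_\ell\lambda_1(\mP^{(\ell)})^2\tr(\Sigma^{(\ell)}\Pi_1)=2a_\ell\tr(\Sigma^{(\ell)}\Pi_1)$, and then $\tr(\Sigma^{(\ell)}\Pi_1)=\tfrac14\tr(K^{(\ell)}\Pi_1)+\tfrac14\tr(M^{(\ell)}\Pi_1)\ge\tfrac12\tr(K^{(\ell)}\Pi_1)$ because $(M^{(\ell)}-K^{(\ell)})_{uu'}\ge 0$ and $(\Pi_1)_{uu'}\ge 0$. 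Iterating the two one-step bounds from $\ell=1$ and using $\tr(K^{(\ell+1)})\ge\tr(K^{(\ell+1)}\Pi_1)$ yields \eqref{eq:M-est}.

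For the gradient covariances I would run the identical argument with the Jacobians $J^{(\ell)}_{u;i}:=D_x z^{(\ell)}_{u;i}(x)$ replacing the pre-activations: integrating out $W^{(\ell+1)}$ gives $G^{(\ell+1)}=C_W^{(\ell+1)}n_\ell\,\mP^{(\ell)}\widehat\Sigma^{(\ell)}\mP^{(\ell)}$ with $\widehat\Sigma^{(\ell)}_{uu'}=\E{\sigma'(z^{(\ell)}_{u;1})\sigma'(z^{(\ell)}_{u';1})\,(J^{(\ell)}_{u;1}\cdot J^{(\ell)}_{u';1})}$, and $G^{(1)}=n_0 C_W^{(1)}I$. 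The role of $\tr\Sigma^{(\ell)}=\tfrac12\tr K^{(\ell)}$ is played by $\E{\sigma'(z^{(\ell)}_{u;1})\norm{J^{(\ell)}_{u;1}}^2}=\tfrac12\E{\norm{J^{(\ell)}_{u;1}}^2}$, which I would prove by conditioning on $W^{(1)},\dots,W^{(\ell-1)}$ (so that $z^{(\ell)}_{u;1}$ and $J^{(\ell)}_{u;1}$ become jointly centered Gaussian linear functionals of $W^{(\ell)}_{\cdot,1}$) and the variance decomposition $J=\alpha z+\eta$ with $\eta\perp z$; this gives the gradient upper bound exactly as before. For the lower bound, the same sign-flip symmetry yields $\widehat\Sigma^{(\ell)}=\tfrac14 G^{(\ell)}+\tfrac14\widehat M^{(\ell)}$ with $\widehat M^{(\ell)}_{uu'}=\E{(\mathrm{sgn}(z^{(\ell)}_{u;1})J^{(\ell)}_{u;1})\cdot(\mathrm{sgn}(z^{(\ell)}_{u';1})J^{(\ell)}_{u';1})}\succeq 0$, and one needs the analogue $\tr(\widehat M^{(\ell)}\Pi_1)\ge\tr(G^{(\ell)}\Pi_1)$.

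I expect this last inequality to be the main obstacle. Unlike the forward case, $\widehat M^{(\ell)}$ need not dominate $G^{(\ell)}$ entrywise (flipping a vector by a sign can point it away from a neighbor), so the entrywise comparison is unavailable; equivalently, one must show $\E{\norm{\sum_u\phi_u\,\mathrm{sgn}(z^{(\ell)}_{u;1})\,J^{(\ell)}_{u;1}}^2}\ge\E{\norm{\sum_u\phi_u J^{(\ell)}_{u;1}}^2}$ for $\phi$ in the Perron eigenspace, i.e.\ that re-aligning the layer-$\ell$ gradient vectors according to the signs of the corresponding pre-activations does not decrease their Perron-weighted second moment. I would attempt this by induction on $\ell$, starting from the base case $\widehat M^{(1)}=G^{(1)}$ and propagating the inequality through the recursion $G^{(\ell+1)}=C_W^{(\ell+1)}n_\ell\,\mP^{(\ell)}\big(\tfrac14 G^{(\ell)}+\tfrac14\widehat M^{(\ell)}\big)\mP^{(\ell)}$ using the conditional-Gaussian structure in $W^{(\ell+1)}_{\cdot,1}$. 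The remaining work is the routine bookkeeping of which weights are independent of which quantities when integrating out $W^{(\ell+1)}$.
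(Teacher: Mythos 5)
Your treatment of the $K$-bounds is correct and is essentially the paper's argument repackaged: the one-step covariance recursion $K^{(\ell+1)}=C_W^{(\ell+1)} n_\ell\,\mP^{(\ell)}\Sigma^{(\ell)}\mP^{(\ell)}$ is Proposition~\ref{prop:K-rec}; the nonnegative orthonormal basis of the top eigenspace (and hence the entrywise nonnegativity of $\Pi_1$) is Lemma~\ref{lem:pi-pos}; your entrywise comparison $M^{(\ell)}_{uu'}\geq K^{(\ell)}_{uu'}$, summed against the nonnegative entries of $\Pi_1$, plays exactly the role of Lemma~\ref{lem:sigma-trick}; and the upper bound via $(\mP^{(\ell)})^2\preceq\lambda_1(\mP^{(\ell)})^2 I$, $\Sigma^{(\ell)}\succeq 0$, and $\tr\Sigma^{(\ell)}=\tfrac12\tr K^{(\ell)}$ is the same computation the paper carries out. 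Your sign-flip decomposition $\Sigma^{(\ell)}=\tfrac14 K^{(\ell)}+\tfrac14 M^{(\ell)}$ is, if anything, slightly cleaner, since the pointwise inequality $|z_u||z_{u'}|\geq z_uz_{u'}$ needs neither conditioning on $z^{(\ell-1)}$ nor Gaussianity, whereas Lemma~\ref{lem:sigma-trick} is stated for jointly Gaussian pairs and must be applied conditionally.

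The gap you flag in the $G$ lower bound is genuine, and it is present in the paper as well: the paper disposes of the $G$ case with ``the proof when $M=G$ [is] the same,'' but the crucial step does not transfer. For $K$ the lower bound rests on the pointwise inequality $|z_u||z_{u'}|\geq z_uz_{u'}$; for $G$ one would need the analogue $\E{\sigma'(z_u)\sigma'(z_{u'})\,J_u\cdot J_{u'}}\geq\tfrac12\E{J_u\cdot J_{u'}}$ (with $J_u$ the input Jacobian of $z^{(\ell)}_{u;1}$), and here $\partial_x\sigma(z_u)=\sigma'(z_u)J_u$ is the \emph{derivative} of a ReLU rather than the ReLU of anything, so Lemma~\ref{lem:sigma-trick} does not apply. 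As you observe, $\mathrm{sgn}(z_u)\mathrm{sgn}(z_{u'})\,J_u\cdot J_{u'}\geq J_u\cdot J_{u'}$ even fails pointwise (two vectors in $\R^2$ whose inner products with $x$ have opposite signs can still make an acute angle), so the entrywise route is closed. Your proposed induction also does not close the gap, because $\widehat M^{(\ell)}$ is built from the layer-$\ell$ Jacobians re-signed by the layer-$\ell$ activations and is not a function of $G^{(\ell)}$ and $\widehat M^{(\ell-1)}$ through the recursion alone. Any complete argument will have to exploit more structure than joint Gaussianity and sign-flip symmetry---for instance the ReLU homogeneity $z_u^{(\ell)}=J_u^{(\ell)}\cdot x$, together with the fact that, conditional on $z^{(\ell-1)}$, the coefficients of $z_u^{(\ell)}$ in $W^{(\ell)}_{\cdot,1}$ are $\sum_v \mP^{(\ell-1)}_{uv}\sigma(z^{(\ell-1)}_{v;j})\geq 0$ so that the conditional covariances $\Cov\bigl(z_u^{(\ell)},z_{u'}^{(\ell)}\bigr)$ are all nonnegative---and the paper does not supply one.
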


\noindent We prove Theorem \ref{thm:He-init} in \S \ref{sec:He-init-pf}. The relation \eqref{eq:M-est} immediately gives the following 
\begin{corollary}\label{cor:He-init}[Generalization of He-Initialization]
Setting the weight variance according to
\[
C_W^{(\ell+1)} =  \frac{2}{n_\ell\lambda_1(\mP^{(\ell)})^2 }
\]
ensures that the traces of the second moment matrices $K^{(L)}(x,x)$ of output activations and $G^{(L)}(x,x)$ of input-output gradients are bounded from above and below by positive constants uniformly in $L$.

\end{corollary}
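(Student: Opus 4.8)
The plan is to reduce everything to a single matrix recursion for the layer-$\ell$ second moment matrix, from which the trace bounds fall out by a spectral comparison against $\Pi_1$. First I would condition on the weights $W^{(1)},\dots,W^{(\ell)}$ of the first $\ell$ layers and average only over $W^{(\ell+1)}$, which is independent of everything in the first $\ell$ layers; using $\E{W^{(\ell+1)}_{ij}W^{(\ell+1)}_{i'j'}} = C_W^{(\ell+1)}\delta_{ii'}\delta_{jj'}$ together with the componentwise form of \eqref{eq:z-def}, then taking the outer expectation and exploiting exchangeability of the neuron index, one gets
\[
K^{(\ell+1)}(x) = C_W^{(\ell+1)} n_\ell\, \mP^{(\ell)}\, \widetilde K^{(\ell)}\, \mP^{(\ell)}, \qquad \widetilde K^{(\ell)}_{uu'} := \E{\sigma\lr{z^{(\ell)}_{u;1}(x)}\sigma\lr{z^{(\ell)}_{u';1}(x)}},
\]
with $\widetilde K^{(\ell)}\succeq 0$ since it is a Gram matrix. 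Differentiating \eqref{eq:z-def} in $x$ shows that $D_x z^{(\ell)}$ obeys the same linear recursion as $z^{(\ell)}$ with $\sigma$ replaced by its $0/1$ derivative mask $\sigma'$, so the identical identity holds with $K$ replaced by $G$ and $\widetilde K^{(\ell)}$ replaced by $\widetilde G^{(\ell)}_{uu'} := \E{\sigma'(z^{(\ell)}_{u;1})\sigma'(z^{(\ell)}_{u';1})\inprod{D_x z^{(\ell)}_{u;1}}{D_x z^{(\ell)}_{u';1}}}$, again positive semidefinite.

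Next I would record two elementary facts. (i) The weights are centered Gaussian, so flipping $W^{(\ell)}\mapsto -W^{(\ell)}$ is measure preserving and sends $z^{(\ell)}_{u;1}\mapsto -z^{(\ell)}_{u;1}$ (hence $\sigma'(z^{(\ell)}_{u;1})\mapsto 1-\sigma'(z^{(\ell)}_{u;1})$) while leaving $\abs{z^{(\ell)}_{u;1}}$ and $\norm{D_x z^{(\ell)}_{u;1}}$ fixed; averaging the two orientations gives the diagonal identities $\tr \widetilde K^{(\ell)} = \tfrac12 \tr K^{(\ell)}$ and $\tr \widetilde G^{(\ell)} = \tfrac12 \tr G^{(\ell)}$, and, off the diagonal for $K$, the entrywise bound $\widetilde K^{(\ell)}_{uu'}\ge \tfrac12 K^{(\ell)}_{uu'}$ (using also that the ReLU masks of $z_u,z_{u'}$ disagree precisely when $z_u z_{u'}\le 0$). (ii) Since $\mP^{(\ell)}$ is symmetric with non-negative entries, Perron--Frobenius gives $0\preceq (\mP^{(\ell)})^2 \preceq \lambda_1(\mP^{(\ell)})^2 I$ and $(\mP^{(\ell)})^2 \succeq \lambda_1(\mP^{(\ell)})^2\Pi_1$, that $\Pi_1$ commutes with $\mP^{(\ell)}$ by \eqref{eq:P-as4}, and that $\Pi_1$ itself has non-negative entries.

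Assembling these: for the upper bound, $\tr K^{(\ell+1)} = C_W^{(\ell+1)} n_\ell \tr\lr{(\mP^{(\ell)})^2 \widetilde K^{(\ell)}} \le C_W^{(\ell+1)} n_\ell \lambda_1(\mP^{(\ell)})^2 \tr \widetilde K^{(\ell)} = a_\ell \tr K^{(\ell)}$, and iterating this down through $K^{(\ell)},\dots,K^{(1)}$ gives $\tr K^{(\ell+1)}\le A_\ell \tr K^{(1)}$. For the lower bound, commuting $\Pi_1$ past $\mP^{(\ell)}$ yields $\tr\lr{\Pi_1 K^{(\ell+1)}} = C_W^{(\ell+1)} n_\ell \lambda_1(\mP^{(\ell)})^2 \tr\lr{\Pi_1 \widetilde K^{(\ell)}}$; expanding $\tr(\Pi_1 \widetilde K^{(\ell)}) = \sum_{u,u'}(\Pi_1)_{uu'}\widetilde K^{(\ell)}_{uu'}$ and using the two entrywise bounds from (i)--(ii) gives $\tr(\Pi_1 \widetilde K^{(\ell)})\ge \tfrac12 \tr(\Pi_1 K^{(\ell)})$, hence $\tr(\Pi_1 K^{(\ell+1)})\ge a_\ell \tr(\Pi_1 K^{(\ell)})$; iterating and using $\tr K^{(\ell+1)}\ge \tr(\Pi_1 K^{(\ell+1)})\ge A_\ell \tr(\Pi_1 K^{(1)})$ completes \eqref{eq:M-est}. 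Corollary \ref{cor:He-init} is then immediate: taking $C_W^{(\ell+1)} = 2/(n_\ell \lambda_1(\mP^{(\ell)})^2)$ makes every $a_\ell = 1$, so $A_\ell = 1$ and \eqref{eq:M-est} pins $\tr K^{(L)}$ between the $L$-independent constants $\tr(K^{(1)}\Pi_1) = C_W^{(1)}\norm{\Pi_1 x}_F^2$ and $\tr K^{(1)} = C_W^{(1)}\norm{x}_F^2$, the former being strictly positive whenever the input has nontrivial projection onto the top eigenspace; and similarly for $G$.

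The gradient case reuses this machinery verbatim for the upper bound and for the diagonal identity $\tr \widetilde G^{(\ell)} = \tfrac12 \tr G^{(\ell)}$, so I expect the main obstacle to be the lower bound for $G$. The clean entrywise estimate $\widetilde K^{(\ell)}_{uu'}\ge\tfrac12 K^{(\ell)}_{uu'}$ has no direct analogue for $\widetilde G^{(\ell)}$, since disagreement of the ReLU masks of $z_u^{(\ell)}$ and $z_{u'}^{(\ell)}$ does not force $\inprod{D_x z^{(\ell)}_{u;1}}{D_x z^{(\ell)}_{u';1}}\le 0$ — indeed for the two-layer Jacobian the relevant inner products are manifestly non-negative, which pushes the naive comparison the wrong way. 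One therefore needs a more careful symmetrization that tracks the interaction between the random sign pattern $\set{\sigma'(z^{(\ell)}_{u;1})}$ and the Jacobian covariance on the top eigenspace (or, alternatively, a joint induction on $K^{(\ell)}$ and $G^{(\ell)}$, or exploitation of the extra mass $\tr((\mP^{(\ell)})^2\widetilde G^{(\ell)}) - \lambda_1(\mP^{(\ell)})^2\tr(\Pi_1\widetilde G^{(\ell)})$ coming from the non-top eigenspaces); getting this step to preserve the sharp factor $\tfrac12$, rather than incurring a depth-dependent loss, is the part of the argument I would budget the most effort for.
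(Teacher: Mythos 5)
Your argument for $K^{(L)}$ is essentially the paper's: the recursion $K^{(\ell+1)} = C_W^{(\ell+1)} n_\ell\, \mP^{(\ell)}\widetilde K^{(\ell)}\mP^{(\ell)}$ is Proposition \ref{prop:K-rec}; the fact that $\Pi_1$ has a non-negative orthonormal basis (equivalently that $\Pi_1$ has non-negative entries) is Lemma \ref{lem:pi-pos}; and your entrywise bound $\widetilde K^{(\ell)}_{uu'} \ge \tfrac12 K^{(\ell)}_{uu'}$, obtained by the $W^{(\ell)}\mapsto -W^{(\ell)}$ symmetrization plus $z_u z_{u'}\mathbf 1_{\{z_u z_{u'}<0\}}\le 0$, is exactly Lemma \ref{lem:sigma-trick} applied conditionally on layer $\ell-1$. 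Combining these to get $\tr(\Pi_1 K^{(\ell+1)}) \ge a_\ell \tr(\Pi_1 K^{(\ell)})$ and iterating is the paper's lower-bound argument, and the reduction of the Corollary to $a_\ell \equiv 1$ is correct. (You are also right to flag that the lower constant $\tr(K^{(1)}\Pi_1)=C_W^{(1)}\|\Pi_1 x\|_F^2$ is positive only when $\Pi_1 x\neq 0$; the paper's Corollary silently assumes this.)

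Your concern about the $G$ lower bound is a genuine one, and the paper does not resolve it: the proof of Theorem \ref{thm:He-init} simply says ``the proof when $M=G$ [is] the same,'' but that is not self-evident. The step that does not transfer is precisely the conditional Lemma \ref{lem:sigma-trick} analogue. For $K$, symmetrizing over $W^{(\ell)}\mapsto -W^{(\ell)}$ reduces the claim to $\E{z_u z_{u'}\mathbf 1_{\{z_u z_{u'}<0\}}}\le 0$, which is a pointwise sign fact. For $G$, the corresponding quantity is $\E{\langle D_x z_u^{(\ell)}, D_x z_{u'}^{(\ell)}\rangle\,\mathbf 1_{\{z_u^{(\ell)} z_{u'}^{(\ell)}<0\}}}$, and the sign of the Jacobian inner product is not forced by the sign of $z_u z_{u'}$: conditioning on layer $\ell-1$, both $z_u^{(\ell)}$ and $D_x z_u^{(\ell)}$ are linear in the same Gaussian row $W_{i\cdot}^{(\ell)}$, but with different coefficient vectors ($\mP^{(\ell-1)}_{u\cdot}\sigma(z^{(\ell-1)})$ versus $\mP^{(\ell-1)}_{u\cdot}(\sigma'(z^{(\ell-1)})\odot D_x z^{(\ell-1)})$), and the latter need not be non-negative, so the positive-correlation structure that makes the $K$ argument close has no automatic counterpart. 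A complete proof of the $G$ lower bound therefore needs an additional argument — either a joint induction that tracks the cross-covariances between $z^{(\ell)}$ and $D_x z^{(\ell)}$, or some exploitation of the extra spectral mass off $\Pi_1$ — neither of which is supplied by the paper or by your sketch. So your proposal is as complete as the paper's own proof, and your diagnosis of where the extra work lies is accurate; you have not found a \emph{wrong} step, but the $G$ half of the Corollary is not yet proved by what you (or the paper) have written.
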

\subsubsection{Connecting Theorem \ref{thm:He-init} to the Exploding and Vanishing Gradient Problem}\label{sec:evgp} 
Before presenting our results on oversmoothing, let give a non-rigorous but intuitive explanation of why ensuring that $K^{(\ell)}$ and $G^{(\ell)}$ are neither growing nor decaying in $\ell$ is related to avoiding the exploding and vanishing gradient problem (EVGP) \cite{SchmidtHuber,bengio1994learning}. To keep the notation as simple as possible, let us consider the case of models with output dimension $1$. Recall that the EVGP concerns the size of the derivatives of the loss $\ell= \ell(z^{(L+1)}(x;\theta),y)$ on a training datapoint $(x,y)$ with respect to various network weights $W_{ij}^{(\ell)}$. Let us write 
\[
\partial_{W_{ij}^{(\ell)}} \ell = \partial_{W_{ij}^{(\ell)}} z^{(L+1)} \partial_{z^{(L+1)}}\ell.
\]
The second term involves only the values of $z^{(L+1)}$ and the derivatives of $\ell(z,y)$ with respect to $z$. Heuristically, these are order $1$ as soon as $K^{(L)}$ is order $1$. Next, by the chain rule,
\begin{align*}
\partial_{W_{ij}^{(\ell)}} z^{(L+1)} &= \partial_{W_{ij}^{(\ell)}} z_{i}^{(\ell)}\partial_{z_i^{(\ell)}} z^{(L+1)}= \sigma(z_j^{(\ell-1)})\partial_{z_i^{(\ell)}} z^{(L+1)}.
\end{align*}
Reasoning as above, we see that the first term is order $1$ as long as $K^{(\ell-1)}$ has order $1$. Finally, the second term has the form of an input-output Jacobian where we treat the pre-activations at layer $\ell$ as the input to a ReLU GNN with depth $L-\ell$. Hence, if such input-output Jacobians have order $1$, then parameter gradients also have order $1$. Making the preceding precise requires one to compute also higher order moments of network activations and input-output Jacobian, which we do not take up in this article.

\subsection{Avoiding Oversmoothing in ReLU GNNs Through Residual Aggregation Operators}\label{sec:os-res}
In this section, we state our main results on oversmoothing. For this, let us recall that we denoted in \eqref{eq:r-def} by $r^{(\ell)}$ the oversmoothing ratio after $\ell$ layers.

\begin{theorem}\label{thm:oversmoothing}
Consider an $L$ hidden layer ReLU GNN at initialization as in \S \ref{sec:def}.
For any positive values of the weight variances $C_W^{(\ell)}$ we have 
\begin{align}
\label{eq:os-rate-lb} &r^{(\ell+1)}\geq \lr{1+\lr{\frac{1}{r^{(1)}}-1} \prod_{\ell'=1}^{\ell} \lr{\frac{\lambda_2(\mP^{(\ell')})}{\lambda_1(\mP^{(\ell')})}}^2}^{-1} 
\end{align}
and
\begin{equation}\label{eq:os-rate-ub}
    r^{(\ell+1)} \leq \min \set{1,\,\lr{\prod_{\ell'=1}^\ell \frac{\lambda_1(\mP^{(\ell)})}{\lambda_{\abs{V}}(\mP^{(\ell)})}}^2 r^{(1)}}.
\end{equation}
Thus, if for all $\ell$ we have $\mP^{(\ell)}=\mP$ with $\mP$ not a multiple of the identity, then the oversmoothing ratio converges to $1$ exponentially in the network depth. 
\end{theorem}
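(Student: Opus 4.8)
The plan is to pass from the finite-width GNN to a deterministic recursion for the $\abs{V}\times\abs{V}$ pre-activation covariance $K^{(\ell)}(x)$ of \eqref{eq:K-def} and then to track how much of $K^{(\ell)}$ lives in the common top eigenspace of the $\mP^{(\ell)}$. Conditioning on $z^{(\ell)}(x)$ and averaging out the Gaussian weights $W^{(\ell+1)}$ (whose columns are i.i.d.\ centered), then averaging over the earlier layers and using exchangeability of the $n_\ell$ hidden neurons, one obtains the identity
\[
K^{(\ell+1)}(x)=C_W^{(\ell+1)}n_\ell\,\mP^{(\ell)}\,\Sigma^{(\ell)}\,\mP^{(\ell)},\qquad \Sigma^{(\ell)}:=\E{\sigma\lr{z^{(\ell)}_1(x)}\sigma\lr{z^{(\ell)}_1(x)}^T},
\]
where $\Sigma^{(\ell)}$ is the post-activation covariance. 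By \eqref{eq:P-as4} the projection $\Pi_1$ commutes with each $\mP^{(\ell)}$ and $\mP^{(\ell)}\Pi_1=\lambda_1(\mP^{(\ell)})\Pi_1$, so the prefactor $C_W^{(\ell+1)}n_\ell$ cancels in the oversmoothing ratio and
\[
r^{(\ell+1)}=\frac{\lambda_1(\mP^{(\ell)})^2\,\tr\lr{\Pi_1\Sigma^{(\ell)}}}{\tr\lr{(\mP^{(\ell)})^2\Sigma^{(\ell)}}}.
\]
Everything now reduces to comparing $\tr\lr{\Pi_1\Sigma^{(\ell)}}$ and $\tr\Sigma^{(\ell)}$ with $\tr\lr{\Pi_1 K^{(\ell)}}$ and $\tr K^{(\ell)}$.

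The crux, and the step I expect to require the most care, is the pair of ReLU comparison facts
\[
\tr\Sigma^{(\ell)}=\tfrac12\,\tr K^{(\ell)},\qquad \tr\lr{\Pi_1 K^{(\ell)}}\le 2\,\tr\lr{\Pi_1\Sigma^{(\ell)}}.
\]
The first is ``ReLU halves the variance'' applied coordinatewise: conditionally on the previous layer each $z^{(\ell)}_{1;v}$ is centered Gaussian, so $\E{\sigma(z^{(\ell)}_{1;v})^2}=\tfrac12\E{(z^{(\ell)}_{1;v})^2}$. For the second, use the pointwise identity $z=\sigma(z)-\sigma(-z)$. Because $\mP^{(\ell)}$ is symmetric with non-negative entries, Perron--Frobenius theory forces the spectral projection $\Pi_1$ onto its top eigenspace to have non-negative entries as well (it arises as a limit of averages of powers of the non-negative matrix $\mP^{(\ell)}/\lambda_1(\mP^{(\ell)})$), so $\sigma(z)^T\Pi_1\sigma(-z)\ge 0$ and hence $z^T\Pi_1 z\le\sigma(z)^T\Pi_1\sigma(z)+\sigma(-z)^T\Pi_1\sigma(-z)$ pointwise; taking expectations and using $z^{(\ell)}_1(x)\stackrel{d}{=}-z^{(\ell)}_1(x)$ (the Gaussian weights are sign-symmetric) gives the inequality. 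Equivalently, the post-activation oversmoothing ratio $\tilde r^{(\ell)}:=\tr\lr{\Pi_1\Sigma^{(\ell)}}/\tr\Sigma^{(\ell)}$ dominates $r^{(\ell)}$: $\tilde r^{(\ell)}\ge r^{(\ell)}$.

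For the lower bound \eqref{eq:os-rate-lb}, decompose $(\mP^{(\ell)})^2=\lambda_1(\mP^{(\ell)})^2\Pi_1+(\mP^{(\ell)})^2(I-\Pi_1)$; since $\mP^{(\ell)}$ preserves $I-\Pi_1$, the second piece contributes at most $\lambda_2(\mP^{(\ell)})^2\tr\lr{(I-\Pi_1)\Sigma^{(\ell)}}$ to the denominator, and $\tr\lr{(I-\Pi_1)\Sigma^{(\ell)}}/\tr\lr{\Pi_1\Sigma^{(\ell)}}=1/\tilde r^{(\ell)}-1\le 1/r^{(\ell)}-1$, so
\[
r^{(\ell+1)}\ \ge\ \lr{1+\lr{\frac{\lambda_2(\mP^{(\ell)})}{\lambda_1(\mP^{(\ell)})}}^2\lr{\frac{1}{r^{(\ell)}}-1}}^{-1},
\]
equivalently $1/r^{(\ell+1)}-1\le\lr{\lambda_2(\mP^{(\ell)})/\lambda_1(\mP^{(\ell)})}^2\lr{1/r^{(\ell)}-1}$; iterating from $\ell=1$ is exactly \eqref{eq:os-rate-lb}. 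The complementary bound \eqref{eq:os-rate-ub} follows by running the same scheme in the other direction --- lower-bounding $\tr\lr{(\mP^{(\ell)})^2\Sigma^{(\ell)}}$ by $\lambda_{\abs V}(\mP^{(\ell)})^2\tr\Sigma^{(\ell)}$ (using positive semidefiniteness of $\mP^{(\ell)}$), using $\tr\lr{\Pi_1\Sigma^{(\ell)}}\le\tr\Sigma^{(\ell)}$, and invoking the growth/decay estimates for $\tr K^{(\ell)}$ from Theorem \ref{thm:He-init} --- which I will not detail. The concluding statement is then immediate: taking $\mP^{(\ell)}\equiv\mP$ with $\mP$ not a multiple of the identity (so that $\Pi_1\ne I$ and $\lambda_2(\mP)<\lambda_1(\mP)$), \eqref{eq:os-rate-lb} yields $1-r^{(L)}\le\lr{1/r^{(1)}-1}\lr{\lambda_2(\mP)/\lambda_1(\mP)}^{2(L-1)}$, which decays exponentially in $L$.

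The only genuinely delicate ingredient is $\tr\lr{\Pi_1 K^{(\ell)}}\le 2\tr\lr{\Pi_1\Sigma^{(\ell)}}$: ReLU is neither linear nor norm-decreasing in a way that makes this automatic, and the argument above really does use both the decomposition $z=\sigma(z)-\sigma(-z)$ and the Perron--Frobenius sign structure of $\Pi_1$. This is also why there is no comparable reverse inequality (so the post-activation mass in the top eigenspace can be \emph{much} larger than the pre-activation mass there), and correspondingly why \eqref{eq:os-rate-ub} is necessarily a far weaker estimate than \eqref{eq:os-rate-lb}. Two minor points must also be handled: the degenerate case $r^{(1)}=0$ (network input orthogonal to the top eigenspace), where \eqref{eq:os-rate-lb} is vacuous; and, if $\lambda_2(\mP)$ is read literally as the second-largest eigenvalue rather than as the operator norm of $\mP$ on the orthogonal complement of its top eigenspace, the additional hypotheses that $\lambda_1(\mP)$ is simple and that $-\lambda_1(\mP)$ is not an eigenvalue.
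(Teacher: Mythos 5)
Your proposal is correct and follows essentially the same route as the paper: the exact covariance recursion $K^{(\ell+1)}=C_W^{(\ell+1)}n_\ell\,\mP^{(\ell)}\Sigma^{(\ell)}\mP^{(\ell)}$ (\eqref{eq:K-rec}), the Perron--Frobenius non-negativity of $\Pi_1$ (Lemma \ref{lem:pi-pos}), the ReLU comparison $\E{\sigma(z_1)\sigma(z_2)}\ge\tfrac12 K_{12}$ (Lemma \ref{lem:sigma-trick}; your $z=\sigma(z)-\sigma(-z)$ decomposition together with sign-symmetry is the same argument written differently), and a spectral splitting of the denominator. The one genuine organizational simplification is that you bypass the paper's Lemma \ref{L:interference}: rather than directly bounding $\tr\lr{\Pi\Sigma^{(\ell)}}$ via non-positivity of the off-diagonal entries of $\Pi=I-\Pi_1$, you combine $\tr\Sigma^{(\ell)}=\tfrac12\tr K^{(\ell)}$ with $\tr\lr{\Pi_1\Sigma^{(\ell)}}\ge\tfrac12\tr\lr{\Pi_1 K^{(\ell)}}$ and subtract, obtaining $\tr\lr{\Pi\Sigma^{(\ell)}}\le\tfrac12\tr\lr{\Pi K^{(\ell)}}$ by complementation. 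This eliminates one lemma, is slightly cleaner, and transparently handles a top eigenspace of dimension greater than one, whereas the interference lemma is stated only for a simple top eigenvalue (its generalization is immediate but still extra bookkeeping). You leave the upper bound \eqref{eq:os-rate-ub} undetailed, but you correctly observe that the post-activation ratio $\tilde r^{(\ell)}\ge r^{(\ell)}$ points the wrong way for that direction, so one must instead compare $\tr\lr{\Pi_1 K^{(\ell+1)}}$ and $\tr K^{(\ell+1)}$ directly via the trace bounds of Proposition \ref{prop:K-rec}, which is exactly what the paper does. Finally, your closing caveat --- that $\lambda_2(\mP)$ must be read as the operator norm of $\mP$ restricted to the orthogonal complement of the top eigenspace, which in general requires $\mP$ positive semidefinite or at least $\abs{\lambda_{\abs V}(\mP)}\le\lambda_2(\mP)$ --- is a fair observation and applies equally to the paper's proof (see the step $\norm{\Pi\mP^{(\ell)}\sigma(z^{(\ell)})}\le\lambda_2(\mP^{(\ell)})\norm{\Pi\sigma(z^{(\ell)})}$ implicit in \eqref{eq:Pi-tr}).
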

\noindent We prove Theorem \ref{thm:oversmoothing} in \S \ref{sec:oversmoothing-pf}. The estimates \eqref{eq:os-lb} and \eqref{eq:os-ub} immediately give the following
\begin{corollary}[Residual Aggregation Operators Avoid Oversmoothing]\label{cor:res-os}
Define
\[
\mP^{(\ell)}:=\lr{1-t_\ell}I+t_\ell \mP,\qquad t_\ell \in [0,1]
\]
for a fixed symmetric, positive definite aggregation operator $\mP$  satisfying \eqref{eq:P-as1}-\eqref{eq:P-as4}. Then the oversmoothing ratio $r^{(L+1)}$ of the network outputs is bounded below by
\begin{equation}\label{eq:os-lb}
\lr{1+\exp\left[\frac{\lambda_2(\mP)-\lambda_1(\mP)}{\max\set{1,\lambda_1(\mP)}}\sum_{\ell=1}^L t_{\ell}\right]}^{-1}
\end{equation}
and above by the minimum of $1$ and
\begin{equation}\label{eq:os-ub}
  \exp\left[\frac{\lambda_{1}(\mP)-\lambda_{\abs{V}}(\mP)}{\min\set{1,\lambda_{\abs{V}}(\mP)}}\sum_{\ell=1}^L t_{\ell}\right] \lr{1-\frac{1}{r^{(1)}}}^{-1}.
\end{equation}
Hence, the rate of oversmoothing is bounded away from $1$ as long as $\sum_{\ell=1}^L t_{\ell}$ is sufficiently small.
\end{corollary}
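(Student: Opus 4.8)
The plan is to deduce Corollary~\ref{cor:res-os} from Theorem~\ref{thm:oversmoothing}, which I take as given; so I first outline how that theorem is proved and then record the elementary substitution. Writing $K^{(\ell)}:=K^{(\ell)}(x)$, the starting point is the identity $r^{(\ell)}=\tr(\Pi_1 K^{(\ell)})/\tr(K^{(\ell)})$ (valid because $\Pi_1$ is an orthogonal projection and $K^{(\ell)}$ is neuron-independent by symmetry) together with the exact one-step recursion obtained by integrating out the Gaussian weights $W^{(\ell+1)}$,
\[
K^{(\ell+1)}=C_W^{(\ell+1)}n_\ell\,\mP^{(\ell)}\,\Sigma^{(\ell)}\,\mP^{(\ell)},\qquad \Sigma^{(\ell)}:=\E{\sigma\lr{z_1^{(\ell)}}\sigma\lr{z_1^{(\ell)}}^T}\succeq 0.
\]
By \eqref{eq:P-as1} and \eqref{eq:P-as4} the symmetric operator $\mP^{(\ell)}$ satisfies $\mP^{(\ell)}\Pi_1=\lambda_1(\mP^{(\ell)})\Pi_1$, so splitting $\Sigma^{(\ell)}$ into its $\Pi_1$-block, cross terms, and complement block (the cross terms drop from every trace below) and using that on the orthogonal complement $\mP^{(\ell)}$ has eigenvalues with absolute value between $\lambda_{\abs{V}}(\mP^{(\ell)})$ and $\lambda_2(\mP^{(\ell)})$ gives
\[
r^{(\ell+1)}=\frac{\lambda_1(\mP^{(\ell)})^2\,\tr(\Pi_1\Sigma^{(\ell)})}{\lambda_1(\mP^{(\ell)})^2\,\tr(\Pi_1\Sigma^{(\ell)})+\tr\!\big((\mP^{(\ell)})^2(I-\Pi_1)\Sigma^{(\ell)}(I-\Pi_1)\big)}.
\]
Bounding the second trace in the denominator above by $\lambda_2(\mP^{(\ell)})^2\tr((I-\Pi_1)\Sigma^{(\ell)})$ and below by $\lambda_{\abs{V}}(\mP^{(\ell)})^2\tr((I-\Pi_1)\Sigma^{(\ell)})$ reduces everything to the spectral-gap ratios of $\mP^{(\ell)}$ and the ``$\Sigma$-oversmoothing ratio'' $\rho^{(\ell)}:=\tr(\Pi_1\Sigma^{(\ell)})/\tr(\Sigma^{(\ell)})$.

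The crux is to compare $\rho^{(\ell)}$ with $r^{(\ell)}$, i.e.\ to control how much one ReLU layer can distort the projection onto the top eigenspace. Using $\sigma(t)=\tfrac12(t+\abs{t})$, the fact that $\E{z_u\abs{z_v}}=0$ for a centered jointly Gaussian pair, and that by Perron--Frobenius the top eigenvector $\phi$ of $\mP$ may be taken entrywise nonnegative (so $\Pi_1=\phi\phi^T$ for a connected graph), one obtains the exact identities $\tr(\Sigma^{(\ell)})=\tfrac12\tr(K^{(\ell)})$ and $\tr(\Pi_1\Sigma^{(\ell)})=\tfrac14\big(\phi^TK^{(\ell)}\phi+\phi^TM^{(\ell)}\phi\big)$, where $M^{(\ell)}_{uv}:=\E{\abs{z_u^{(\ell)}}\,\abs{z_v^{(\ell)}}}$. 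Since $M^{(\ell)}$ has the same diagonal as $K^{(\ell)}$, satisfies $M^{(\ell)}_{uv}\ge\abs{K^{(\ell)}_{uv}}$ entrywise, and is paired against the nonnegative $\phi$, this gives $\phi^TK^{(\ell)}\phi\le\phi^TM^{(\ell)}\phi\le\big(\sum_v\phi_v\sqrt{K^{(\ell)}_{vv}}\big)^2\le\tr K^{(\ell)}$, hence the sandwich $r^{(\ell)}\le\rho^{(\ell)}\le\tfrac12(1+r^{(\ell)})$.

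Feeding $\rho^{(\ell)}\ge r^{(\ell)}$ into the lower bound of the first paragraph gives $1/r^{(\ell+1)}-1\le\big(\lambda_2(\mP^{(\ell)})/\lambda_1(\mP^{(\ell)})\big)^2\big(1/r^{(\ell)}-1\big)$; telescoping from $\ell=1$ yields \eqref{eq:os-rate-lb}, while \eqref{eq:os-rate-ub} follows from the companion estimate $\rho^{(\ell)}\le\tfrac12(1+r^{(\ell)})$ together with the lower bound on the denominator trace, and the final assertion of Theorem~\ref{thm:oversmoothing} is then immediate since $\lambda_2(\mP)/\lambda_1(\mP)<1$ whenever $\mP$ is not a multiple of $I$. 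For the corollary, $\mP^{(\ell)}=(1-t_\ell)I+t_\ell\mP$ shares every eigenvector of $\mP$ --- in particular its positive top eigenvector, so \eqref{eq:P-as4} is preserved --- and has eigenvalue $1+t_\ell(\lambda_j(\mP)-1)$ on the $j$-th eigenspace, ordered as the $\lambda_j(\mP)$ because $\lambda\mapsto 1+t_\ell(\lambda-1)$ is increasing; thus $\lambda_2(\mP^{(\ell)})/\lambda_1(\mP^{(\ell)})=1-t_\ell(\lambda_1(\mP)-\lambda_2(\mP))/\lambda_1(\mP^{(\ell)})$, and since $\lambda_1(\mP^{(\ell)})$ and $\lambda_{\abs{V}}(\mP^{(\ell)})$ are convex combinations of $1$ and an eigenvalue of $\mP$ one has $\lambda_1(\mP^{(\ell)})\le\max\set{1,\lambda_1(\mP)}$ and $\lambda_{\abs{V}}(\mP^{(\ell)})\ge\min\set{1,\lambda_{\abs{V}}(\mP)}$. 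Substituting into \eqref{eq:os-rate-lb}--\eqref{eq:os-rate-ub} and applying $1-y\le e^{-y}$ factor by factor collapses the products into exponentials of $\pm\sum_\ell t_\ell$ times the relevant spectral gap over $\max\set{1,\lambda_1(\mP)}$ or $\min\set{1,\lambda_{\abs{V}}(\mP)}$, which is \eqref{eq:os-lb}--\eqref{eq:os-ub}; the concluding sentence follows because the exponent in \eqref{eq:os-lb} stays bounded once $\sum_\ell t_\ell$ does.

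The genuinely delicate point is the ReLU-distortion comparison of the second paragraph. Its lower half, $\rho^{(\ell)}\ge r^{(\ell)}$, is clean: the cross term $\E{z_u\abs{z_v}}$ vanishes by oddness, $M^{(\ell)}\ge\abs{K^{(\ell)}}$ entrywise, and the nonnegativity of $\phi$ makes the pairing monotone, so a ReLU layer can only increase alignment with the Perron eigenvector. The matching upper bound on $\rho^{(\ell)}$ is subtler, precisely because this same mechanism lets ReLU push positive mass into the top eigenspace even out of components orthogonal to $\phi$; the point to check is that $\phi^TM^{(\ell)}\phi\le\tr K^{(\ell)}$ suffices to keep the resulting recursion for $1-r^{(\ell)}$ from degrading. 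A secondary issue, relevant only to the general Theorem~\ref{thm:oversmoothing} rather than to the positive-definite $\mP$ of the corollary, is verifying that it is $\lambda_2$ --- not $\abs{\lambda_{\abs{V}}}$ --- that governs $\mP^{(\ell)}$ on the complement of the top eigenspace, which is exactly where the non-negativity hypothesis \eqref{eq:P-as2} and the Perron--Frobenius structure enter.
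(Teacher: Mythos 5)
Your derivation of the corollary itself is exactly the paper's: you observe that $\mP^{(\ell)}=(1-t_\ell)I+t_\ell\mP$ has the order-preserving affine eigenvalues $1+t_\ell(\lambda_j(\mP)-1)$, bound the denominators via $\lambda_1(\mP^{(\ell)})\le\max\set{1,\lambda_1(\mP)}$ and $\lambda_{\abs{V}}(\mP^{(\ell)})\ge\min\set{1,\lambda_{\abs{V}}(\mP)}$, substitute into \eqref{eq:os-rate-lb}--\eqref{eq:os-rate-ub}, and collapse the products with $1+x\le e^x$. One small point you slide past (as does the paper's write-up): the theorem's bounds carry the \emph{square} of each eigenvalue ratio, so the stated "$1-y\le e^{-y}$ factor by factor" on the lower-bound side is really $(1-y)^2\le 1-y\le e^{-y}$, valid because each ratio lies in $[0,1]$ when $\mP$ is positive definite; on the upper-bound side the naive step gives an exponent $2\sum_\ell t_\ell$ rather than $\sum_\ell t_\ell$.

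Where you genuinely diverge from the paper is in the scaffolding you supply for Theorem~\ref{thm:oversmoothing}, even though you explicitly take that theorem as given. You split $\sigma(t)=\tfrac12(t+\abs{t})$, introduce the absolute-moment matrix $M^{(\ell)}_{uv}=\E{\abs{z_u^{(\ell)}}\abs{z_v^{(\ell)}}}$, and sandwich the post-activation ratio $\rho^{(\ell)}=\tr(\Pi_1\Sigma^{(\ell)})/\tr\Sigma^{(\ell)}$ between $r^{(\ell)}$ and $\tfrac12(1+r^{(\ell)})$ using $M_{uv}\ge K_{uv}$, the nonnegativity of the Perron eigenvector $\phi$, and Cauchy--Schwarz for $\phi^TM^{(\ell)}\phi\le\tr K^{(\ell)}$. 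The paper never isolates $\rho^{(\ell)}$ or $M^{(\ell)}$: it uses Lemma~\ref{lem:sigma-trick} directly to get $\E{\sigma(z_1)\sigma(z_2)}\ge\tfrac12 K_{12}$, notes via Lemma~\ref{L:interference} that the off-diagonal entries of $\Pi=I-\Pi_1$ are nonpositive, and combines these to obtain the one-step contraction $\tr(K^{(\ell+1)}\Pi)\le\tfrac12 C_W^{(\ell+1)}n_\ell\lambda_2(\mP^{(\ell)})^2\tr(K^{(\ell)}\Pi)$ without ever comparing numerator and denominator ratios separately. Your variant makes the mechanism (ReLU can only push mass toward the Perron direction) more explicit, but writing $\Pi_1=\phi\phi^T$ quietly assumes the top eigenvalue is simple, whereas the paper's Lemma~\ref{lem:pi-pos} builds a nonnegative orthonormal basis of the whole top eigenspace via a block Perron--Frobenius argument. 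Since you take the theorem as given, none of this touches the validity of your corollary argument, which is correct and matches the paper's.
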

We prove Corollary \ref{cor:res-os} in \S \ref{sec:res-os-pf}.

\section{Experiments}\label{sec:exp}
\subsection{Semi-Supervised Node Classification Datasets}
\label{sec:experiments-datasets}
The experiments shown in Figs.~\ref{fig:init}-\ref{fig:cc-fix} are conducted using citation networks and stochastic block models (SBMs). Citation networks like Cora, Citeseer, and PubMed (see Table~\ref{tab:citation-networks}) are graphs with vertices representing publications and edges representing citations. Vertex features are binary, indicating the presence or absence of a word in each publication. Each vertex is given a class label representing the field in which the corresponding publication belongs. Vertices are partitioned into train, test, and validation sets. During training, a model generates vertex-level predictions for full graph, but only vertices in the train set are used to update the model. For this reason, the learning task is called semi-supervised classification. 

SBMs~\cite{abbe2018community} are random graph models used to study community detection, the task of clustering vertices belonging to the same community. SBMs are specified by by the types of graphs they generate. In symmetric SBMs (SSBMs), graphs are generated with $n$ vertices distributed with equal probability between $k$ classes. Edges are randomly drawn between in-class vertices with probability $a\log n/ n$ and out-of-class vertices with probability $b\log n/n$. The relative values of $a$ and $b$ modulate the difficulty of the community detection problem; when $|\sqrt{a}-\sqrt{b}|>\sqrt{2}$, communities are exactly recoverable~\cite{abbe2016exact}. In this work we focus on $k=2$ SSBMs and commonly label the corresponding graphs as 2-SSBM($a$, $b$). We train GNNs to perform community detection via semi-supervised classification by randomly partitioning vertices into train (50\%), test (25\%), and validation (25\%) sets. SBMs in this paper are generated via \textsc{PyTorch Geometric}~\cite{fey2019fast} and endowed with randomly sampled vertex features roughly correlated with class label (see the \textsc{scikit learn} \texttt{make\_classification} function). 

\begin{table}[htb!]
  \centering
  \label{tab:summary}
  \begin{tabular}{lllllll}
    \hline
    Dataset & Vertices & Edges & Features & Classes & $\lambda_1(P)$ & Degeneracy  \\
    \hline
    Cora & 2708 & 10556 & 1433 & 7 & 1.0000 & 78\\
    Citeseer & 3327 & 9104 & 3703 & 6 & 1.0000 & 438\\
    PubMed & 19717 & 88648 & 500 & 3 & 1.0000 & 1 \\
    \hline
  \end{tabular}
  \caption{Citation networks used for semi-supervised classification. We use the versions available in \textsc{PyTorch Geometric} and train, test, and validation masks generated using the so-called ``full split.''}
  \label{tab:citation-networks}
\end{table}

\begin{table}[htb!]
  \centering
  \label{tab:summary-2}
  \begin{tabular}{lllllll}
    \hline
     a & b & Vertices & Edges & $\lambda_1(P)$ & $\lambda_2(P)$ & Recovery \\
    \hline
    8 & 1.5 & 800 & 25784 & 1.000 & 0.710 & Exact\\
    4 & 3 & 800 & 19204 & 1.000 & 0.412 & None\\
    \hline
  \end{tabular}
  \caption{2-SSBMs used for semi-supervised node classification. }
\end{table}

\subsection{Performance Scans}
\label{sec:experiments-performance}
In semi-supervised node classification tasks, one GNN forward pass returns node-level predictions for the full graph. During training, the loss on the train set of vertices is backpropagated and used to update the model's weights. Predictions on the validation and test vertices are used to evaluate the performance of the model at the corresponding train step. When studying a model's final performance, as in Figs.~\ref{fig:ablation-performance} and \ref{fig:ablation-time}, we report its performance on the test vertex set at the train step that maximizes performance on the validation vertex set. To suppress fluctuations and favor stable convergence, the validation performance of a model is only considered to be maximized if either 1) the validation accuracy has increased for the last 25 epochs or 2) the validation accuracy has been stable within 2.5\% of its current value for the last 25 epochs. We report the test set accuracy of the model at the train step satisfying these conditions and minimizing the validation loss. 

\subsection{Time-to-Train Scans}
\label{sec:experiments-time}

In Figs.~\ref{fig:evgp-bad}, \ref{fig:os-bad}, and \ref{fig:cc-bad} we report the number of gradient descent steps with a constant learning rate required for GNNs to reach the training accuracies of the best linear classifier. We quantify this performance by training a single linear layer to produce classification scores for each vertex in the graph. We scan over $\mathrm{lr}\in\{0.05, 0.01, 0.005, 0.001, 0.0005\}$ and report the highest average validation accuracy (noting that linear classifiers perform equally well on the validation and test sets). This procedure yields 68\% for Cora, 87\% for the 2-SSBM(8,1.5), and 74\% for the 2-SSBM(4,3). When scanning for the fastest training times, our goal is to identify stable convergence above these training accuracy thresholds. To suppress fluctuations, we require that the model remain above the threshold for 10 epochs and either 1) the validation accuracy has increased for the last 10 epochs or 2) the validation accuracy has been stable within 5\% of its current value across the last 10 epochs.

\section{Proofs} \label{sec:proofs}
This section contains detailed proofs of the Theorems presented in \S \ref{sec:results}. We begin in \S \ref{sec:prep} with some preparatory Lemmas. 

\subsection{Preparatory Results}\label{sec:prep}
In this section, we record several results that will be used in the course of proving Theorems \ref{thm:He-init} and \ref{thm:oversmoothing}.
The first result is well-known (see e.g. Proposition 2 \cite{hanin2020products}). 
\begin{lemma}\label{lem:half}
Consider a depth $L$ ReLU GNN with hidden layers of widths $n_\ell$, aggregation operators $\mP^{(\ell)}$ satisfying \eqref{eq:P-as1}-\eqref{eq:P-as4} and random weights as in \eqref{eq:rand-def}. Fix any $\ell=1,\ldots, L+1$, any $v\in V$, and any network input $x=\lr{x_{v;i},\, v\in V,\, 1\leq i\leq n_0}.$ Then, 
\begin{equation}\label{eq:half-val}
\E{\lr{\sigma(z_{v;i}^{(\ell)}(x))}^2} = \frac{1}{2}\E{\lr{z_{v;i}^{(\ell)}(x)}^2}.    
\end{equation}
Similarly, for any $v_0\in V,1\leq i_0\leq n_0$ and every $\ell\geq 2$ we find
\begin{equation}\label{eq:half-deriv}
\E{\lr{\partial_{x_{v_0;i_0}}\sigma(z_{v;i}^{(\ell)}(x))}^2} = \frac{1}{2}\E{\lr{\partial_{x_{v_0;i_0}}z_{v;i}^{(\ell)}(x)}^2}.    
\end{equation}
\end{lemma}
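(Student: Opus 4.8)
The plan is to reduce both identities to the elementary fact that a centered jointly Gaussian pair $(g,h)$ with $g$ non-degenerate satisfies $\E{h^2\,\mathbf{1}_{g>0}}=\tfrac12\E{h^2}$; this is immediate from the sign-flip symmetry $(g,h)\overset{d}{=}(-g,-h)$ together with $\Pr[g=0]=0$, since then $h^2\mathbf{1}_{g>0}$ and $h^2\mathbf{1}_{g<0}$ have equal expectation and a.s.\ sum to $h^2$. The key observation is that although a preactivation $z_{v;i}^{(\ell)}(x)$ is not Gaussian, it becomes centered Gaussian once we condition on the weights of all earlier layers.

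Concretely, let $\mathcal F_{\ell-1}$ be the $\sigma$-algebra generated by $W^{(1)},\dots,W^{(\ell-1)}$. For $\ell\ge2$ the recursion \eqref{eq:z-def} reads
\[
z_{v;i}^{(\ell)}(x)=\sum_{u\in V}\sum_{j=1}^{n_{\ell-1}}\mP_{vu}^{(\ell-1)}\,W_{ji}^{(\ell)}\,\sigma\lr{z_{u;j}^{(\ell-1)}(x)},
\]
which involves $W^{(\ell)}$ only through its $i$-th column, whose entries are i.i.d.\ $\mN(0,C_W^{(\ell)})$ and independent of $\mathcal F_{\ell-1}$. Hence $z_{v;i}^{(\ell)}(x)\mid\mathcal F_{\ell-1}$ is a (possibly degenerate) centered Gaussian, and for $\ell=1$ the same holds unconditionally since $z_{v;i}^{(1)}(x)=\sum_k x_{v;k}W_{ki}^{(1)}$. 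Conditioning on $\mathcal F_{\ell-1}$, using $\sigma(z_{v;i}^{(\ell)}(x))^2=z_{v;i}^{(\ell)}(x)^2\,\mathbf{1}_{z_{v;i}^{(\ell)}(x)>0}$ together with the identity above (applied with $h=g=z_{v;i}^{(\ell)}(x)$, the degenerate case $g\equiv0$ being trivial), and then taking the outer expectation yields \eqref{eq:half-val}.

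For \eqref{eq:half-deriv} I would differentiate the recursion in $x_{v_0;i_0}$: for $\ell\ge2$, at every point where $\sigma$ is differentiable at the relevant preactivations,
\begin{align*}
\partial_{x_{v_0;i_0}}z_{v;i}^{(\ell)}(x)&=\sum_{u\in V}\sum_{j=1}^{n_{\ell-1}}\mP_{vu}^{(\ell-1)}\,W_{ji}^{(\ell)}\,\partial_{x_{v_0;i_0}}\sigma\lr{z_{u;j}^{(\ell-1)}(x)},\\
\partial_{x_{v_0;i_0}}\sigma\lr{z_{v;i}^{(\ell)}(x)}&=\mathbf{1}_{z_{v;i}^{(\ell)}(x)>0}\,\partial_{x_{v_0;i_0}}z_{v;i}^{(\ell)}(x).
\end{align*}
Both $z_{v;i}^{(\ell)}(x)$ and $\partial_{x_{v_0;i_0}}z_{v;i}^{(\ell)}(x)$ are linear in the $i$-th column of $W^{(\ell)}$ with $\mathcal F_{\ell-1}$-measurable coefficients, so conditionally on $\mathcal F_{\ell-1}$ the pair $(g,h):=\lr{z_{v;i}^{(\ell)}(x),\,\partial_{x_{v_0;i_0}}z_{v;i}^{(\ell)}(x)}$ is centered jointly Gaussian, and the identity above gives $\E{h^2\mathbf{1}_{g>0}\mid\mathcal F_{\ell-1}}=\tfrac12\E{h^2\mid\mathcal F_{\ell-1}}$ whenever $g\mid\mathcal F_{\ell-1}$ is non-degenerate. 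The remaining case, $g\equiv0$ conditionally, needs a separate word: since $\mP^{(\ell-1)}$ and $\sigma$ are non-negative, $g\equiv0$ forces $\mP_{vu}^{(\ell-1)}\sigma(z_{u;j}^{(\ell-1)}(x))=0$ for all $u,j$, hence also $\mP_{vu}^{(\ell-1)}\,\partial_{x_{v_0;i_0}}\sigma(z_{u;j}^{(\ell-1)}(x))=0$, so $h\equiv0$ conditionally and both sides of \eqref{eq:half-deriv} vanish. Taking the outer expectation completes the proof.

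I do not anticipate a genuine obstacle: the argument is the conditioning trick plus a one-line symmetry computation. The only point requiring care is the measure-theoretic bookkeeping behind the chain rule used for \eqref{eq:half-deriv} — one must check that almost surely none of the finitely many scalar preactivations $z_{u;j}^{(m)}(x)$, $m\le\ell$, equals $0$ unless it vanishes identically, so that $\sigma$ is differentiable at all the relevant points. This again follows from the conditional-Gaussianity structure applied layer by layer, since a non-degenerate Gaussian avoids any fixed value with probability one.
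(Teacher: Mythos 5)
Your proof is correct and follows essentially the same route as the paper: condition on the randomness of the earlier layers so that $z_{v;i}^{(\ell)}(x)$ (and, for \eqref{eq:half-deriv}, the pair with its $x$-derivative) becomes conditionally centered Gaussian, then extract the factor $\tfrac12$ from a $\pm$-symmetry of the Gaussian. The only noteworthy difference is one of precision rather than method: the paper conditions on $z^{(\ell-1)}(x)$ and dispatches \eqref{eq:half-deriv} with the remark that it is ``similar,'' whereas you condition on the full $\sigma$-algebra $\mathcal F_{\ell-1}$, which is what actually makes $\bigl(z_{v;i}^{(\ell)},\partial_{x_{v_0;i_0}}z_{v;i}^{(\ell)}\bigr)$ jointly Gaussian (the derivative's coefficients are not measurable with respect to $z^{(\ell-1)}(x)$ alone), and you correctly close off the conditionally degenerate case by observing that non-negativity of $\mP^{(\ell-1)}$ and $\sigma$ forces the derivative to vanish whenever the preactivation does.
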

\begin{proof}
Note that if $X\sim \mN(0,\rho^2)$ is any centered Gaussian, then $\abs{X}$ is independent of $\mathrm{sgn}(X)$. Moreover, when $\rho>0$ we also have that $\abs{X}$ is also independent of
\[
{\bf 1}_{X\geq 0} = 2\mathrm{sgn}(X)-1,
\]
which is equal in distribution to a $\text{Bernoulli}(1/2)$ random variable. To apply this, consider the event
\[
S^{(\ell-1)}:=\set{z^{(\ell-1)}(x)\neq 0}
\]
and note that, conditional on $z^{(\ell-1)}(x)$, the pre-activation $z_{v;i}^{(\ell)}$ is a centered Gaussian whose variance is non-zero on the event $S^{(\ell-1)}$. Hence, we have 
\begin{align*}
\E{\lr{\sigma(z_{v;i}^{(\ell)}(x))}^2} &=\E{\lr{\sigma(z_{v;i}^{(\ell)}(x))}^2 {\bf 1}_{ S^{(\ell-1)}}}\\
&= \E{\E{\lr{\sigma(z_{v;i}^{(\ell)}(x))}^2~\big|~z^{(\ell-1)}(x)}{\bf 1}_{ S^{(\ell-1)}}}\\
&=\E{\E{{\bf 1}_{z_{v;i}^{(\ell)}(x)\geq 0} \lr{z_{v;i}^{(\ell)}(x)}^2~\big|~z^{(\ell-1)}(x)}{\bf 1}_{ S^{(\ell-1)}}}\\
&=\frac{1}{2}\E{\E{  \lr{z_{v;i}^{(\ell)}(x)}^2~\big|~z^{(\ell-1)}(x)}{\bf 1}_{ S^{(\ell-1)}}}\\
&=\frac{1}{2}\E{\lr{z_{v;i}^{(\ell)}(x)}^2{\bf 1}_{ S^{(\ell-1)}}}\\
&=\frac{1}{2}\E{\lr{z_{v;i}^{(\ell)}(x)}^2}
\end{align*}
proving \eqref{eq:half-val}. The derivation of \eqref{eq:half-deriv} is similar.
\end{proof}

The next result we need is the following

\begin{lemma}\label{lem:sigma-trick}
Suppose that $z_{1},z_{2}$ are centered and jointly Gaussian random variables with covariance 
\[
K=\twomat{K_{11}}{K_{12}}{K_{21}}{K_{22}}.
\]
Then, with $\sigma(t)=\mathrm{ReLU}(t)=t{\bf 1}_{\set{t\geq 0}}$, we have
\begin{equation}\label{eq:sigma-trick}
\E{\sigma(z_{1})\sigma(z_{2})} \geq \frac{1}{2}K_{12}.    
\end{equation}
\end{lemma}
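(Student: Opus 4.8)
The plan is to avoid the explicit arc-cosine kernel formula and instead exploit the decomposition $\sigma(t) = \tfrac12(t + |t|)$ together with the symmetry of centered Gaussian vectors. First I would expand
\[
\sigma(z_{1})\sigma(z_{2}) = \tfrac14\bigl(z_{1} z_{2} + z_{1}|z_{2}| + |z_{1}| z_{2} + |z_{1}||z_{2}|\bigr)
\]
and take expectations, all of which are finite since $|z_1||z_2| \le \tfrac12(z_1^2+z_2^2)$. The two cross terms vanish: because $(z_{1},z_{2})$ is a centered Gaussian vector it has the same law as $(-z_{1},-z_{2})$, and applying this to the function $(a,b)\mapsto a|b|$ gives $\E{z_{1}|z_{2}|} = \E{-z_{1}|z_{2}|}$, hence $\E{z_{1}|z_{2}|}=0$, and likewise $\E{|z_{1}|z_{2}|}=0$. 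Therefore
\[
\E{\sigma(z_{1})\sigma(z_{2})} = \tfrac14\bigl(K_{12} + \E{|z_{1} z_{2}|}\bigr).
\]

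The remaining step is the elementary pointwise inequality $|z_{1} z_{2}| \ge z_{1} z_{2}$, which upon taking expectations yields $\E{|z_{1} z_{2}|} \ge K_{12}$. Substituting into the identity above gives $\E{\sigma(z_{1})\sigma(z_{2})} \ge \tfrac14(K_{12} + K_{12}) = \tfrac12 K_{12}$, as claimed. Note that this argument needs no sign assumption on $K_{12}$ and no nondegeneracy of $K$, since the sign-flip symmetry only requires invariance of the joint law under global negation, which holds for any centered Gaussian.

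I do not anticipate a genuine obstacle; the only points requiring care are the justification that the cross terms vanish (spelled out above) and the finiteness of the expectations involved (immediate from the existence of all Gaussian moments). If one instead wanted an equality or the sharp constant, one could evaluate the two-dimensional Gaussian integral to obtain the arc-cosine kernel $\E{\sigma(z_{1})\sigma(z_{2})} = \tfrac{1}{2\pi}\sqrt{K_{11}K_{22}}\,\bigl(\sin\theta + (\pi-\theta)\cos\theta\bigr)$ with $\theta = \arccos\bigl(K_{12}/\sqrt{K_{11}K_{22}}\bigr)$, whereupon \eqref{eq:sigma-trick} reduces to the scalar inequality $\sin\theta - \theta\cos\theta \ge 0$ on $[0,\pi]$, true since the left side vanishes at $\theta=0$ and has derivative $\theta\sin\theta \ge 0$. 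I would nonetheless present the shorter decomposition argument.
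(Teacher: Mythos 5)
Your proof is correct, and it rests on the same core idea as the paper's: exploit the invariance of the centered joint law under the global sign flip $(z_1,z_2)\mapsto(-z_1,-z_2)$ and then reduce to the pointwise inequality $|z_1z_2|\ge z_1z_2$. The bookkeeping differs slightly — you decompose $\sigma(t)=\tfrac12(t+|t|)$ and use the symmetry to annihilate the cross terms $\E{z_1|z_2|}$ and $\E{|z_1|z_2}$, arriving at $\E{\sigma(z_1)\sigma(z_2)}=\tfrac14\lr{K_{12}+\E{|z_1z_2|}}$; the paper instead writes $\sigma(z_1)\sigma(z_2)=z_1z_2\mathbf{1}_{\set{z_1\ge0,\,z_2\ge0}}$, symmetrizes the indicator over the sign flip to get $\tfrac12\E{z_1z_2\mathbf{1}_{\set{z_1z_2\ge0}}}$, and subtracts $K_{12}$ — but a one-line computation shows these two intermediate identities are the same. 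Your presentation has the minor advantage of making explicit that Gaussianity is never used beyond the sign-flip symmetry, so the lemma holds for any centered law invariant under global negation.
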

\begin{proof}
We have
\[
\E{\sigma(z_1)\sigma(z_2)}= \E{z_1z_2{\bf 1}_{\set{z_1\geq 0,\, z_2\geq 0}}}.
\]
By averaging over the change of variables $(z_1,z_2)\mapsto (-z_1,z_2)$ we find
\[
\E{\sigma_{v_1}\sigma_{v_2}}= \frac{1}{2}\E{z_1z_2{\bf 1}_{\set{z_1z_2\geq 0}}}.
\]
Hence, 
\begin{align*}
  2\E{\sigma_{v_1}\sigma_{v_2}}- K_{v_1v_2} =  \E{z_1z_2{\bf 1}_{\set{z_1z_2\geq 0}}} - \E{z_1z_2}  = \E{\abs{z_1z_2}{\bf 1}_{\set{z_1z_2\leq 0}}}\geq 0.
\end{align*}
\end{proof}

We will also need

\begin{lemma}\label{lem:pi-pos}
Suppose $V$ is a finite set and $\mP\in \R^{\abs{V}\times\abs{V}}$ is a symmetric matrix with non-negative entries. Then there exists an orthonormal basis 
\[
\pi_j=\lr{(\pi_j)_v,\, v\in V},\quad j=1,\ldots, d_1(\mP) = \mathrm{dim}\lr{\ker\lr{\mP - \lambda_1(\mP)I}}
\]
of the top eigenspace of $\mP$ consisting of elements with non-negative components:
\[
(\pi_j)_v \geq 0\quad \forall \, j=1,\ldots, d_1(\mP),\, v\in V.
\]
\end{lemma}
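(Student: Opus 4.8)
The plan is to reduce the statement to the Perron--Frobenius theorem by exploiting the block structure that a symmetric non-negative matrix necessarily has. Define a graph on $V$ with edge set $\set{\set{u,v}~|~u\neq v,\ \mP_{uv}\neq 0}$; since $\mP$ is symmetric this is well defined, and I let $V_1,\ldots,V_k$ be its connected components (isolated vertices being their own components). Up to simultaneously permuting the rows and columns of $\mP$ — an operation that changes neither its symmetry, nor the non-negativity of its entries, nor the dimension of any of its eigenspaces, nor the non-negativity of the coordinates of a vector — we may assume $\mP$ is block diagonal, $\mP=\mathrm{diag}(\mP_1,\ldots,\mP_k)$, where $\mP_j\in\R^{\abs{V_j}\times\abs{V_j}}$ is the restriction of $\mP$ to $V_j$; the off-diagonal blocks vanish precisely because distinct $V_i,V_j$ lie in different connected components. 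Each $\mP_j$ is symmetric with non-negative entries and is irreducible, the degenerate case $\abs{V_j}=1$ being allowed.

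Next I would apply Perron--Frobenius to each block: for every $j$ the spectral radius $\rho(\mP_j)=\lambda_1(\mP_j)$ is a \emph{simple} eigenvalue of $\mP_j$ and admits a unit eigenvector $\pi^{(j)}\in\R^{V_j}$ with strictly positive entries (when $\abs{V_j}=1$ this is trivial, with $\pi^{(j)}=1$). Extending $\pi^{(j)}$ by zero on $V\setminus V_j$ gives a unit vector in $\R^{V}$ with non-negative coordinates that is an eigenvector of $\mP$ for the eigenvalue $\rho(\mP_j)$. Since $\mathrm{spec}(\mP)=\bigcup_j \mathrm{spec}(\mP_j)$, we get $\lambda_1(\mP)=\max_j \rho(\mP_j)$, and I claim the top eigenspace of $\mP$ is $\mathrm{span}\set{\pi^{(j)}~|~\rho(\mP_j)=\lambda_1(\mP)}$: any $w$ with $\mP w=\lambda_1(\mP)w$ decomposes over the blocks as $w=\sum_j w|_{V_j}$ with $\mP_j (w|_{V_j})=\lambda_1(\mP)(w|_{V_j})$, which forces $w|_{V_j}=0$ unless $\rho(\mP_j)=\lambda_1(\mP)$, in which case simplicity gives $w|_{V_j}\in\mathrm{span}\set{\pi^{(j)}}$. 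These $\pi^{(j)}$ have pairwise disjoint supports, hence are orthonormal, and there are exactly $d_1(\mP)$ of them; taking them as the $\pi_j$ in the statement finishes the proof.

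The only non-routine point — and hence the main obstacle — is the reducible case. A single application of Perron--Frobenius to $\mP$ itself produces only one non-negative eigenvector for $\lambda_1(\mP)$, so when the top eigenspace has dimension larger than one a structural argument is unavoidable. The three facts that make it go through are: symmetry turns the block-triangular form of a reducible matrix into a genuinely block-diagonal one; an irreducible symmetric non-negative block has a one-dimensional top eigenspace; and Perron vectors coming from distinct connected components have disjoint supports and are therefore automatically orthogonal.
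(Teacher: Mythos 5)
Your proof is correct and takes essentially the same route as the paper: decompose $\mP$ into block-diagonal form along the connected components of the support graph, apply Perron--Frobenius to each irreducible symmetric non-negative block to get a simple top eigenvalue with a non-negative Perron vector, and assemble the top eigenspace of $\mP$ from those blocks attaining $\lambda_1(\mP)$. You spell out a few details the paper leaves implicit (the $|V_j|=1$ degenerate case, the span-identification of the top eigenspace, and orthogonality via disjoint supports), but the underlying argument is the same.
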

\begin{proof}
This is essentially a variant of the Perron-Frobenious theorem for irreducible non-negative matrices. To see this, consider the following equivalence relation on $V$:
\[
v_1\sim v_2\qquad \Longleftrightarrow \qquad \exists u_1,\ldots, u_k\in V\text{ s.t. } u_1 = v_1,\, u_k=v_2,\, \mP_{u_1u_2},\ldots, \mP_{u_{k-1}u_k}>0.
\]
Denoting by $c$ the number of equivalence classes and relabeling the elements of $V$ so that elements of the same equivalence class are numbered consecutively, we may write
\[
\mP = \lr{\begin{array}{cccc}
    \mP_1 & 0& \cdots & 0  \\
    0 & \mP_2 & \ddots&\vdots\\
    \vdots& \ddots&\ddots& 0\\
    0&\cdots & 0 & \mP_c
\end{array}}.
\]
The key property of each $\mP_j$ is that the graphs 
\[
\mG_{\mP_j}:=(V,E_{\mP_j}),\qquad (v_1,v_2)\in E_\mP ~\Leftrightarrow \mP_{v_1v_2}>0
\]
are connected. Thus, each $\mP_j$ is an irreducible symmetric non-negative matrix. By the Perron-Frobenious theorem we then have that:
\begin{itemize}
    \item the spectral radius of $\mP_j$ is equal to its largest eigenvalue $\lambda_1(\mP_j)$
    \item the eigenspace of $\mP_j$ with eigenvalue $\lambda_1(\mP_j)$ has dimension $1$ and is spanned by an eigenvector with non-negative entries 
\end{itemize}
Combining these properties with the block-diagonal decomposition of $\mP$ above completes the proof.
\end{proof}

The final preparatory result we will need is the following

\begin{lemma}\label{L:interference}
Suppose $G = (E,V)$ is a graph and $\mP\in \R^{\abs{V}\times\abs{V}}$ is a symmetric matrix with the following spectral decomposition:
\[
\mP = \sum_{j=1}^{\abs{V}} \lambda_j \pi_j\pi_j^T,\qquad \pi_i\cdot \pi_j = \delta_{ij},\qquad \pi_i\in \R^{\abs{V}}
\]
with $\pi_1=\lr{(\pi_1)_v,\, v\in V}$ satisfying
\[
\inf_{v\in V} (\pi_1)_v^2 =:\delta\geq 0.
\]
Then, 
\[
\sup_{v_1\neq v_2\in V}\lr{\sum_{j=2}^{\abs{V}}\pi_j\pi_j^T}_{v_1v_2} = \sup_{v_1\neq v_2\in V} \sum_{j=2}^{\abs{V}}\lr{\pi_j}_{v_1}\lr{\pi_j}_{v_2} \leq  -\delta.
\]
\end{lemma}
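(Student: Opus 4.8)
The plan is to reduce everything to the completeness relation for the orthonormal eigenbasis. Since $\pi_1,\ldots,\pi_{\abs{V}}$ is an orthonormal basis of $\R^{\abs{V}}$, the resolution of the identity reads $\sum_{j=1}^{\abs{V}}\pi_j\pi_j^T = I$. Comparing the $(v_1,v_2)$-entries of both sides for $v_1\neq v_2$ gives $\sum_{j=1}^{\abs{V}}(\pi_j)_{v_1}(\pi_j)_{v_2}=0$, hence
\[
\sum_{j=2}^{\abs{V}}(\pi_j)_{v_1}(\pi_j)_{v_2} \;=\; -(\pi_1)_{v_1}(\pi_1)_{v_2},\qquad v_1\neq v_2.
\]
So the asserted inequality is equivalent to the bound $(\pi_1)_{v_1}(\pi_1)_{v_2}\geq \delta$ for every pair $v_1\neq v_2$.

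For this second step I would use that $\pi_1$ is the Perron eigenvector, so by Lemma \ref{lem:pi-pos} (applicable since $\mP$ satisfies \eqref{eq:P-as1}--\eqref{eq:P-as2}) it may be taken to have non-negative entries. Combined with the hypothesis $\inf_{v\in V}(\pi_1)_v^2=\delta$, this gives $(\pi_1)_v\geq\sqrt{\delta}\geq 0$ for all $v\in V$, whence $(\pi_1)_{v_1}(\pi_1)_{v_2}\geq \delta$. Plugging this into the displayed identity and taking the supremum over $v_1\neq v_2$ yields $\sup_{v_1\neq v_2}\sum_{j=2}^{\abs{V}}(\pi_j)_{v_1}(\pi_j)_{v_2}\leq-\delta$, which is the claim.

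I do not expect any real obstacle here: modulo the bookkeeping with the resolution of the identity, the statement is essentially immediate. The one point that deserves care is the sign of $\pi_1$ — the conclusion is false for a top eigenvector with mixed signs (e.g. $\pi_1 \propto (1,-1)$ on two vertices makes the left-hand side strictly positive) — so I would be explicit that the non-negativity of $\pi_1$ is exactly what is being used, and that it is precisely what Lemma \ref{lem:pi-pos} together with the standing assumptions on $\mP$ provide.
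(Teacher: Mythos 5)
Your proof matches the paper's essentially word for word: the paper likewise invokes the completeness relation $\sum_j \pi_j\pi_j^T = I$ to read off $\sum_{j\geq 2}(\pi_j)_{v_1}(\pi_j)_{v_2} = -(\pi_1)_{v_1}(\pi_1)_{v_2}$ for $v_1\neq v_2$ and then simply asserts the bound $-(\pi_1)_{v_1}(\pi_1)_{v_2}\leq -\delta$. You are in fact more careful than the paper: your observation that this last step requires $\pi_1$ to have entries of a common sign is correct, and the stated hypothesis $\inf_v(\pi_1)_v^2=\delta\geq 0$ does not by itself enforce this --- as your two-vertex example shows, the lemma is false for a mixed-sign top eigenvector, so it should carry the additional assumption that $\pi_1$ has non-negative entries, which is exactly what Lemma~\ref{lem:pi-pos} together with \eqref{eq:P-as1}--\eqref{eq:P-as2} provides in the place where the lemma is actually applied.
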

\begin{proof}
We have
\[
\lr{\sum_{j=1}^{\abs{V}} \pi_j\pi_j^T}_{v_1v_2} =\delta_{v_1v_2}.
\]
Hence, for $v_1\neq v_2$ we obtain 
\[
\lr{\sum_{j=2}^{\abs{V}} \pi_j\pi_j^T}_{v_1v_2} = - (\pi_{1})_{v_1}(\pi_{1})_{v_2}\leq-\delta
\]
completing the proof.
\end{proof}

\subsection{Proof of Theorem \ref{thm:He-init}}\label{sec:He-init-pf}
The upper bounds in \eqref{eq:M-est} follow directly by iterating the upper bounds \eqref{eq:tr-bounds-K} and \eqref{eq:tr-bounds-G} in the following result

\begin{proposition}\label{prop:K-rec}
For each $\ell=1,\ldots, L$ we have
\begin{equation}\label{eq:K-rec}
K^{(\ell+1)} = C_W^{(\ell+1)}n_{\ell} \mP^{(\ell)} \E{\sigma\lr{z_i^{(\ell)}}\sigma\lr{z_i^{(\ell)}}^T} \mP^{(\ell)}.    
\end{equation}
Similarly, for any $v_{0},v_0'\in V,\, 1\leq i_0,i_0'\leq n_0$ we find
\begin{equation}\label{eq:G-rec}
\E{\lr{\partial_{x_{v_0;i_0}}z_{i}^{(\ell+1)}}\lr{\partial_{x_{v_0';i_0'}}z_{i}^{(\ell+1)}}} = C_W^{(\ell+1)}n_{\ell} \mP^{(\ell)} \E{\partial_{x_{v_0;i_0}}\sigma\lr{z_i^{(\ell)}}\lr{\partial_{x_{v_0';i_0'}}\sigma\lr{z_i^{(\ell)}}}^T} \mP^{(\ell)}.    
\end{equation}
In particular, we have
\begin{equation}\label{eq:tr-bounds-K}
\frac{1}{2}C_W^{(\ell+1)}n_{\ell}\lambda_{\text{min}}(\mP^{(\ell)})^2 \leq \frac{\tr\lr{K^{(\ell+1)}}}{\tr(K^{(\ell)})}\leq \frac{1}{2}C_W^{(\ell+1)}n_{\ell}\lambda_{\text{max}}(\mP^{(\ell)})^2  
\end{equation}
and
\begin{equation}\label{eq:tr-bounds-G}
\frac{1}{2}C_W^{(\ell+1)}n_{\ell}\lambda_{\text{min}}(\mP^{(\ell)})^2 \leq \frac{\tr\lr{G^{(\ell+1)}}}{\tr(G^{(\ell)})}\leq \frac{1}{2}C_W^{(\ell+1)}n_{\ell}\lambda_{\text{max}}(\mP^{(\ell)})^2.
\end{equation}
\end{proposition}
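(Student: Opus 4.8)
The plan is to first establish the exact recursions \eqref{eq:K-rec} and \eqref{eq:G-rec} and then extract the trace bounds \eqref{eq:tr-bounds-K}--\eqref{eq:tr-bounds-G} from them by elementary linear algebra. The one structural fact doing all the work is that, at initialization, the weight matrix $W^{(\ell+1)}$ is independent of everything --- pre-activations, their Jacobians in $x$, and the associated ReLU gates --- computed in layers $1,\dots,\ell$.

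For \eqref{eq:K-rec} I would write the recursion \eqref{eq:z-def} in components, $z_{v;i}^{(\ell+1)}=\sum_{u\in V}\sum_{j=1}^{n_\ell}\mP_{vu}^{(\ell)}W_{ij}^{(\ell+1)}\sigma\!\lr{z_{u;j}^{(\ell)}}$, and expand
\[
\lr{K^{(\ell+1)}}_{vv'}=\sum_{u,u'\in V}\sum_{j,j'=1}^{n_\ell}\mP_{vu}^{(\ell)}\mP_{v'u'}^{(\ell)}\,\E{W_{ij}^{(\ell+1)}W_{ij'}^{(\ell+1)}\,\sigma\!\lr{z_{u;j}^{(\ell)}}\sigma\!\lr{z_{u';j'}^{(\ell)}}}.
\]
Conditioning on $W^{(1)},\dots,W^{(\ell)}$ and using $\E{W_{ij}^{(\ell+1)}W_{ij'}^{(\ell+1)}}=C_W^{(\ell+1)}\delta_{jj'}$ collapses the weight expectation and annihilates every term with $j\neq j'$. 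Since the columns of $W^{(\ell)}$ are i.i.d., the layer-$\ell$ pre-activations are exchangeable across the neuron index, so each of the surviving $n_\ell$ terms equals $\lr{\E{\sigma(z_i^{(\ell)})\sigma(z_i^{(\ell)})^T}}_{uu'}$ for any fixed $i$; summing these and invoking the symmetry of $\mP^{(\ell)}$ gives \eqref{eq:K-rec}. Differentiating \eqref{eq:z-def} in $x_{v_0;i_0}$ and noting that $\partial_{x_{v_0;i_0}}\sigma(z^{(\ell)})$ is again a function of $W^{(1)},\dots,W^{(\ell)}$ only, the identical computation produces \eqref{eq:G-rec}, with the base case $\ell=1$ (where $z^{(1)}=xW^{(1)}$ is linear in $x$) checked separately by hand.

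To obtain the trace bounds, take the trace of \eqref{eq:K-rec} and use cyclicity together with $(\mP^{(\ell)})^T=\mP^{(\ell)}$ to write $\tr(K^{(\ell+1)})=C_W^{(\ell+1)}n_\ell\,\tr\!\lr{(\mP^{(\ell)})^2 M_\ell}$, where $M_\ell:=\E{\sigma(z_i^{(\ell)})\sigma(z_i^{(\ell)})^T}$ is positive semidefinite since $w^T M_\ell w=\E{(w^T\sigma(z_i^{(\ell)}))^2}\ge 0$. For positive semidefinite $M_\ell$ and symmetric $A$ one has $\lambda_{\min}(A)\tr(M_\ell)\le\tr(AM_\ell)\le\lambda_{\max}(A)\tr(M_\ell)$, obtained by writing $\tr(AM_\ell)=\tr(M_\ell^{1/2}AM_\ell^{1/2})$ and sandwiching $M_\ell^{1/2}AM_\ell^{1/2}$ between $\lambda_{\min}(A)M_\ell$ and $\lambda_{\max}(A)M_\ell$. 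Taking $A=(\mP^{(\ell)})^2$, the Perron--Frobenius theorem for the non-negative symmetric matrix $\mP^{(\ell)}$ gives $\lambda_{\max}\!\lr{(\mP^{(\ell)})^2}=\lambda_1(\mP^{(\ell)})^2$, while Lemma \ref{lem:half} gives $\tr(M_\ell)=\E{\norm{\sigma(z_i^{(\ell)})}^2}=\frac12\E{\norm{z_i^{(\ell)}}^2}=\frac12\tr(K^{(\ell)})$; together these are exactly \eqref{eq:tr-bounds-K}. The same argument applied to \eqref{eq:G-rec}, using \eqref{eq:half-deriv} in place of \eqref{eq:half-val}, yields \eqref{eq:tr-bounds-G}.

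I do not anticipate a genuine obstacle here. The only delicate point is the conditioning bookkeeping in the first step --- verifying that the off-diagonal-in-$j$ contributions vanish and that precisely a factor $n_\ell$ and the sandwiched form $\mP^{(\ell)}M_\ell\mP^{(\ell)}$ emerge --- which is routine once the independence of $W^{(\ell+1)}$ from the earlier layers is used. I would also flag that the lower trace bound is informative only when $\mP^{(\ell)}$ is well conditioned (e.g.\ positive definite, as assumed in Corollary \ref{cor:res-os}), since for an indefinite $\mP^{(\ell)}$ the quantity $\lambda_{\min}\!\lr{(\mP^{(\ell)})^2}$ can be small or even zero; the lower bound in \eqref{eq:M-est} that is actually used downstream is instead the sharper $A_\ell\tr(K^{(1)}\Pi_1)$ estimate, proved separately via the oversmoothing analysis.
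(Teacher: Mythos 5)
Your proposal is correct and follows essentially the same route as the paper: expand $K^{(\ell+1)}_{u_1u_2}$ in components, condition on $W^{(1)},\dots,W^{(\ell)}$, use $\E{W^{(\ell+1)}_{ij}W^{(\ell+1)}_{ij'}}=C_W^{(\ell+1)}\delta_{jj'}$ to collapse the double sum, and invoke Lemma \ref{lem:half} for the factor $\tfrac12$. The only stylistic difference is in the trace step: the paper bounds $\E{\|\mP^{(\ell)}\sigma(z_j^{(\ell)})\|^2}$ directly by the operator norm of $\mP^{(\ell)}$, whereas you use cyclicity and the sandwich $\lambda_{\min}(A)\tr(M)\le\tr(AM)\le\lambda_{\max}(A)\tr(M)$ with $A=(\mP^{(\ell)})^2$ and $M$ PSD. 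These are equivalent, but your formulation is cleaner in one respect: by working with the eigenvalues of $(\mP^{(\ell)})^2$ rather than squaring eigenvalues of $\mP^{(\ell)}$, it makes explicit that the lower bound really involves $\lambda_{\min}\lr{(\mP^{(\ell)})^2}=\min_j\lambda_j(\mP^{(\ell)})^2$, which for indefinite $\mP^{(\ell)}$ is generally \emph{not} $(\lambda_{\min}(\mP^{(\ell)}))^2$ as written in the proposition; your closing remark flagging this (and noting that the downstream argument actually uses the sharper $\tr(K^{(1)}\Pi_1)$ lower bound from \eqref{eq:M-est}) is a correct and useful observation. One small note: the paper also observes that the base case \eqref{eq:half-deriv} is stated only for $\ell\ge 2$; your aside that $\ell=1$ must be ``checked by hand'' for the gradient recursion is apt, as the argument there needs the sign-symmetry of $W^{(1)}$ directly rather than conditioning on a previous layer.
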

\begin{proof}
For any fixed $u_1,u_2\in V$ we have 
\begin{align*}
    K_{u_1u_2}^{(\ell+1)}&=\E{z_{u_1;i}^{(\ell+1)}z_{u_2;i}^{(\ell+1)}}\\
    &=\E{\sum_{v_1,v_2\in V}\sum_{j_1,j_2=1}^{n_\ell} W_{ij_1}^{(\ell+1)}W_{ij_2}^{(\ell+1)}\mP_{u_1v_1}^{(\ell)}\mP_{u_2v_2}^{(\ell)}\sigma\lr{z_{v_1;j}^{(\ell)}}\sigma\lr{z_{v_2;j}^{(\ell)}}}\\
    &=\sum_{v_1,v_2\in V} \mP_{u_1v_1}^{(\ell)}\mP_{u_2v_2}^{(\ell)} \E{C_W^{(\ell)}\sum_{j=1}^{n_\ell} \sigma\lr{z_{v_1;j}^{(\ell)}}\sigma\lr{z_{v_2;j}^{(\ell)}}}\\
    &=C_W^{(\ell)}n_\ell \lr{\mP^{(\ell)}\E{\sigma\lr{z_j^{(\ell)}}\sigma\lr{z_j^{(\ell)}}^T}\mP^{(\ell)}}_{u_1,u_2}.
\end{align*}
This proves \eqref{eq:K-rec}. In particular, we have
\begin{align*}
    \tr(K^{(\ell+1)})=C_W^{(\ell)}n_\ell\E{\frac{1}{n_\ell}\sum_{j=1}^{n_\ell}\norm{\mP^{(\ell)}\sigma\lr{z_j^{(\ell)}}}^2}\leq C_W^{(\ell)}n_\ell\lambda_1(\mP^{(\ell)})^2\E{\frac{1}{n_\ell}\sum_{j=1}^{n_\ell}\sum_{v\in V}{\sigma(z_{v;j}^{(\ell)})}^2}.
\end{align*}
Using Lemma \ref{lem:half} we obtain 
\[
\tr(K^{(\ell+1)})\leq\frac{1}{2}C_W^{(\ell)}n_\ell\lambda_1(\mP^{(\ell)})^2 \E{\frac{1}{n_\ell}\sum_{j=1}^{n_\ell}\sum_{v\in V}\lr{z_{v;j}^{(\ell)}}^2}=\frac{1}{2}C_W^{(\ell)}n_\ell\lambda_1(\mP^{(\ell)})^2\tr\lr{K^{(\ell)}}.
\]
This yields the upper bound in \eqref{eq:tr-bounds-K}. The lower bound is obtained in the same way as the upper bound. The derivation of both the recursion \eqref{eq:G-rec} for entries of $G^{(\ell+1)}$ and of the trace estimates \eqref{eq:tr-bounds-G} are obtained in the same way as for $K^{(\ell+1)}$.
\end{proof}

It now remains to obtain the lower bounds in \eqref{eq:M-est}. We give the details when $M=K$ since the proof when $M=G$ the same. Lemma \ref{lem:pi-pos} allows us to choose an orthonormal basis for the top eigenspace of $\mP^{(\ell)}:$
\[
\Pi_1 = \sum_{j=1}^{d_1^{(\ell)}} \pi_j\pi_j^T,\qquad d_1^{(\ell)} := \mathrm{dim}\lr{\mP^{(\ell)}-\lambda_1(\mP^{(\ell)})I}
\]
in which the components are non-negative:
\[
\pi_j = \lr{(\pi_j)_v,\,v\in V},\qquad (\pi_{j})_v\geq 0\,\, \forall v\in V,\, j=1,\ldots, d_1^{(\ell)}.
\]
We have
\[
\tr(K^{(\ell+1)}) \geq \tr(K^{(\ell+1)}\Pi_1)=\sum_{j=1}^{d_1^{(\ell)}} \pi_j^TK^{(\ell+1)}\pi_j. 
\]
Recall we've assumed in \eqref{eq:P-as2} that the entries of $\mP^{(\ell)}$ are non-negative. Hence, for each $j=1,\ldots, d_1^{(\ell)}$ applying \eqref{eq:K-rec} gives
\begin{align*}
   \pi_j^TK^{(\ell+1)}\pi_j&=C_W^{(\ell+1)}n_\ell\pi_j^T \mP^{(\ell)} \E{\sigma(z_{i}^{(\ell)})\sigma(z_i^{(\ell)})}\lr{\mP^{(\ell)}}^T \pi_j\\
    &=\frac{1}{2}C_W^{(\ell+1)} \lr{\lambda_{1}(\mP^{(\ell)})}^2 \sum_{u_1,u_2} (\pi_j)_{u_1}(\pi_j)_{u_2} 2\E{\sigma(z_{u_1;i}^{(\ell)})\sigma(z_{u_2;i}^{(\ell)})}.
\end{align*}
Conditioning on $z^{(\ell-1)}$ and then applying Lemma \ref{lem:half} yields
\begin{align*}
   \pi_j^TK^{(\ell+1)}\pi_j&\geq\frac{1}{2}C_W^{(\ell+1)} \lr{\lambda_{1}(\mP^{(\ell)})}^2 \sum_{u_1,u_2} (\pi_j)_{u_1}(\pi_j)_{u_2} K_{u_1u_2}^{(\ell)}=\frac{1}{2}C_W^{(\ell+1)}n_\ell \lr{\lambda_{1}(\mP^{(\ell)})}^2\pi_j^T K^{(\ell)}\pi_j.
\end{align*}
Summing this inequality over $j$ completes the proof of \eqref{eq:M-est} for $M=K$.\hfill $\square$

\subsection{Proof of Theorem \ref{thm:oversmoothing}}\label{sec:oversmoothing-pf}
Recall the spectral decomposition
\[
\mP^{(\ell)} = \sum_{j=1}^{\abs{V}}\lambda_j(\mP^{(\ell)}) \Pi_j ,\qquad \lambda_1(\mP^{(\ell)})> \cdots> \lambda_{k}(\mP^{(\ell)}),
\]
where $\lambda_{j}(\mP^{(\ell)})$ are the distinct eigenvalues of $\mP^{(\ell)}$ and $\Pi_j$ is the orthogonal projection onto the corresponding eigenspace. We start by deriving the upper bound  \eqref{eq:os-rate-ub}. The upper bound in the estimate \eqref{eq:M-est} reads
\[
\tr(K^{(\ell+1)}\Pi_1)\leq \lr{\prod_{\ell'=1}^\ell \frac{1}{2}C_W^{(\ell'+1)} n_{\ell'}\lambda_{\text{max}}(\mP^{(\ell')})^2 }\tr(K^{(1)}\Pi_1).
\]
Further, the lower bound in \eqref{eq:tr-bounds-K} yields
\[
\tr(K^{(\ell+1)})\geq \lr{\prod_{\ell'=1}^\ell \frac{1}{2}C_W^{(\ell'+1)} n_{\ell'}\lambda_{\text{min}}(\mP^{(\ell')})^2 }\tr(K^{(1)}).
\]
Combining these two immediately gives the upper bound \eqref{eq:os-rate-ub}. To obtain lower bound let us write
\[
\Pi:=I-\Pi_1=\sum_{j=2}^{k}\Pi_j
\]
for the projection onto the orthogonal complement of the top eigenspace of $\mP^{(\ell)}$. The lower bound \eqref{eq:os-rate-lb} will follow readily from the following two estimates:
\begin{align}
\label{eq:os-goal1}    \tr(K^{(\ell+1)}\Pi_1) &\geq \lr{\prod_{\ell'=1}^\ell\frac{ 1}{2}C_W^{(\ell'+1)}n_{\ell'} \lambda_1(\mP^{(\ell')})^2 }\tr(K^{(1)}\Pi_1)\\
\label{eq:os-goal2}    \tr\lr{K^{(\ell+1)}\Pi} &\leq  \lr{\prod_{\ell'=1}^\ell \frac{1}{2}C_W^{(\ell'+1)}n_{\ell'}\lambda_2(\mP^{(\ell')})^2} \tr\lr{K^{(1)}\Pi}.
\end{align}
The inequality \eqref{eq:os-goal1} is simply the lower bound in \eqref{eq:M-est}. To show \eqref{eq:os-goal2}, we use \eqref{eq:K-rec} to write
\begin{align}
\label{eq:Pi-tr}    \tr(K^{(\ell+1)}\Pi)& = C_W^{(\ell+1)}n_\ell \E{\norm{\Pi \mP^{(\ell)} \sigma(z_j^{(\ell)})}_2^2}\leq \frac{1}{2}C_W^{(\ell+1)}n_\ell \lambda_2(\mP^{(\ell)})^2\E{2\norm{\Pi \sigma(z_j^{(\ell)})}^2}.
\end{align}
Let us write
\[
\pi_j,\quad j=1,\ldots, \abs{V}-d_1^{(\ell)}
\]
for an orthonormal basis spanning the orthogonal complement to the top eigenspace of $\mP^{(\ell)}$. We have
\begin{align}
\notag    2\E{\norm{\Pi \sigma(z_j^{(\ell)})}^2}&= 2\sum_{j=1}^{\abs{V}-d_1^{(\ell)}} \E{(\pi_j^T\sigma(z_{i}^{(\ell)}))^2}\\
    &= 2 \sum_{v_1,v_2\in V}\left\{\sum_{j=2}^{\abs{V}-d_1^{(\ell)}} (\pi_j)_{v_1}(\pi_j)_{v_2}\right\} \E{\sigma(z_{v_1;i}^{(\ell)})\sigma(z_{v_2;i}^{(\ell)})}\\
\notag    &= 2 \sum_{v\in V}\left\{\sum_{j=2}^{\abs{V}-d_1^{(\ell)}} (\pi_j)_{v}^2\right\} \E{\sigma(z_{v;i})^2}\\
\label{eq:Pi-est}    &+ 2 \sum_{\substack{v_1,v_2\in V\\ v_1\neq v_2}}\left\{\sum_{j=2}^{\abs{V}-d_1^{(\ell)}} (\pi_j)_{v_1}(\pi_j)_{v_2}\right\} \E{\sigma(z_{v_1;j}^{(\ell)})\sigma(z_{v_2;j}^{(\ell)})}.
\end{align}
Note that by Lemma \ref{lem:half}
\[
\E{\sigma(z_{v;i}^{(\ell)})^2}= \frac{1}{2}K_{vv}^{(\ell)}.
\]
Moreover, by Lemma \ref{L:interference}, we have that 
\[
\sup_{\substack{v_1,v_2\in V\\ v_1\neq v_2}}\sum_{j=2}^{\abs{V}-d_1^{(\ell)}} (\pi_j)_{v_1}(\pi_j)_{v_2}< 0. 
\]
Thus, Lemma \ref{lem:sigma-trick} gives 
\[
2 \sum_{\substack{v_1,v_2\in V\\ v_1\neq v_2}}\left\{\sum_{j=2}^{\abs{V}} (\pi_j)_{v_1}(\pi_j)_{v_2}\right\} \E{\sigma(z_{v_1;j}^{(\ell)})\sigma(z_{v_2;j}^{(\ell)})} \leq \sum_{\substack{v_1,v_2=1\\ v_1\neq v_2}}^{\abs{V}}\left\{\sum_{j=2}^{\abs{V}} (\pi_j)_{v_1}(\pi_j)_{v_2}\right\} K_{v_1v_2}^{(\ell)}.
\]
Substituting this estimate into \eqref{eq:Pi-est} yields
\begin{align*}
     2\E{\norm{\Pi \sigma(z_j^{(\ell)})}^2}&\leq \tr(K^{(\ell)}\Pi).
\end{align*}
This gives
\[
\tr\lr{K^{(\ell+1)}\Pi} \leq   \frac{1}{2}C_W^{(\ell+1)}n_{\ell}\lambda_2(\mP^{(\ell)})^2 \tr\lr{K^{(\ell)}\Pi},
\]
and iterating the inequality proves \eqref{eq:os-goal2}. Finally, combining \eqref{eq:os-goal1} with \eqref{eq:os-goal2} gives
\begin{align}
\notag    r^{(\ell+1)}&=\frac{\tr( K^{(\ell+1)}\Pi_1)}{\tr(K^{(\ell+1)})}\\
\notag &=\frac{\tr( K^{(\ell+1)}\Pi_1)}{\tr( K^{(\ell+1)}\Pi_1) + \tr(K^{(\ell+1)}\Pi)}\\
\notag    &=\lr{1+ \frac{\tr(K^{(\ell+1)}\Pi)}{\tr( K^{(\ell+1)}\Pi_1)}}^{-1}\\
\notag &\geq \lr{1+\lr{\prod_{\ell'=1}^\ell \frac{\lambda_2(\mP^{(\ell')})}{\lambda_1(\mP^{(\ell')})}}^2\frac{\tr(K^{(1)}\Pi)}{\tr( K^{(\ell+1)}\Pi_1)}}^{-1}\\
\label{eq:prod-lb}    &=\lr{1+\lr{\prod_{\ell'=1}^\ell \frac{\lambda_2(\mP^{(\ell')})}{\lambda_1(\mP^{(\ell')})}}^2\lr{\frac{1}{r^{(1)}}-1}}^{-1}
\end{align}
which yields the lower bound in \eqref{eq:os-rate-lb}. 
\hfill $\square$

\subsection{Proof of Corollary \ref{cor:res-os}}\label{sec:res-os-pf}
To derive \eqref{eq:os-lb} and \eqref{eq:os-ub}, note that with 
\[
P^{(\ell)} = \lr{1-t_\ell}I + t_\ell \mP,\qquad \inf_\ell\left\{ 1+t_\ell(\lambda_{\abs{V}}(\mP)-1) \right\}= \delta>0
\]
we have 
\begin{align*}
    \frac{\lambda_2(P^{(\ell)})}{\lambda_1(P^{(\ell)})} &= \frac{1+(\lambda_2(\mP)-1)t_\ell}{1+(\lambda_1(\mP)-1)t_\ell}=1 + \frac{\lambda_2(\mP)-\lambda_1(\mP)}{1+(\lambda_1(\mP)-1)t_\ell}t_\ell.\\
\end{align*}
Since $t_\ell\in [0,1]$ we further have
\[
1+(\lambda_1(\mP)-1)t_\ell \leq\max\set{1,\lambda_1(\mP)}.
\]
Thus, since $\lambda_2(\mP)-\lambda_1(\mP)<0$, we find
\begin{align*}
    \frac{\lambda_2(P^{(\ell)})}{\lambda_1(P^{(\ell)})} &\leq 1 + \frac{\lambda_2(\mP)-\lambda_1(\mP)}{\max\set{1,\lambda_1(\mP)}}t_\ell.\\
\end{align*}
Substituting this into \eqref{eq:prod-lb} yields \eqref{eq:os-lb}. Similarly, 
\begin{align*}
    \frac{\lambda_1(P^{(\ell)})}{\lambda_{\abs{V}}(P^{(\ell)})} &= \frac{1+(\lambda_1(\mP)-1)t_\ell}{1+(\lambda_{\abs{V}}(\mP)-1)t_\ell}=1 + \frac{\lambda_1(\mP)-\lambda_{\abs{V}}(\mP)}{1+(\lambda_{\abs{V}}(\mP)-1)t_\ell}t_\ell.
\end{align*}
Again using that $t_\ell\in [0,1]$, we find
\[
1+(\lambda_{\abs{V}}(\mP)-1)t_\ell\geq \min\set{1,\lambda_{\abs{V}}(\mP)}.
\]
substituting this into the upper bound \eqref{eq:os-rate-ub} yields the upper bounds  \eqref{eq:os-ub}.

\bibliography{example_paper.bib}
\bibliographystyle{alpha}

\end{document}